

\documentclass[preprint,12pt]{elsarticle}




\usepackage{amssymb}
\usepackage{amsmath}
\usepackage{amsthm}
\usepackage{hyperref}
\usepackage[latin1]{inputenc}  
\usepackage[T1]{fontenc}
\usepackage{hyperref}

\usepackage{algorithmicx}
\usepackage[ruled]{algorithm}
\usepackage{algpseudocode}
\usepackage{float}
\usepackage{graphicx}
\usepackage{subcaption}
\usepackage{lineno}

\usepackage{caption}
\newenvironment{myalgorithm}[1]{%
  \noindent\hrule height 0.8pt
  \vspace{2pt}
  \captionsetup{font=small, labelfont=bf, justification=justified, singlelinecheck=false}%
  \captionof{algorithm}{#1}%
  \vspace{2pt}
  \noindent\hrule height 0.8pt
  \vspace{6pt}
}{%
  \vspace{6pt}
  \noindent\hrule height 0.8pt
}


\newtheorem{thm}{Theorem}
\newtheorem{remark}{Remark}

\journal{Elsevier}
\begin{document}

\begin{frontmatter}



\title{Improving Matrix Exponential for Generative AI Flows: A Taylor-Based Approach Beyond Paterson--Stockmeyer}


\author[iteam]{J.~Sastre}
\author[cor2]{D. Faronbi}
\author[i3m]{J.M. Alonso}
\author[cor2]{P. Traver}
\author[imm]{J. Ib\'a\~nez}
\author[idf]{N.~Lloret}
\address[iteam]{Instituto de Telecomunicaciones y Aplicaciones Multimedia}
\address[i3m]{Instituto de Instrumentaci\'{o}n para Imagen Molecular}
\address[imm]{Instituto de Matem\'{a}tica Multidisciplinar}
\address[idf]{Instituto de Dise\~no para la Fabricaci\'on y Producci\'on Automatizada}
\address[cor2]{Music and Audio Research Lab, New York University, New York, USA}
\address{Universitat Polit\`{e}cnica de Val\`{e}ncia, Spain}
\cortext[cor1]{Corresponding author: jsastrem@upv.es. This work has
been supported by the Generalitat Valenciana Grant CIAICO/2023/275.}

\begin{abstract}
\noindent 
The matrix exponential is a fundamental operator in scientific computing and system simulation, with applications ranging from control theory and quantum mechanics to modern generative machine learning. While Pad\'e approximants combined with scaling and squaring have long served as the standard, recent Taylor-based methods, which utilize polynomial evaluation schemes that surpass the classical Paterson--Stockmeyer technique, offer superior accuracy and reduced computational complexity. This paper presents an optimized Taylor-based algorithm for the matrix exponential, specifically designed for the high-throughput requirements of generative AI flows. We provide a rigorous error analysis and develop a dynamic selection strategy for the Taylor order and scaling factor to minimize computational effort under a prescribed error tolerance. Extensive numerical experiments demonstrate that our approach provides significant acceleration and maintains high numerical stability compared to existing state-of-the-art implementations. These results establish the proposed method as a highly efficient tool for large-scale generative modeling.
\end{abstract}

\begin{keyword}
Matrix exponential \sep scaling and squaring \sep Taylor
approximation \sep Pad\'e approximant \sep efficient matrix
polynomial evaluation \sep Algorithmic Efficiency \sep Generative AI flows.
\MSC[2020] 65F60 \sep 41A10 \sep 65D15 \sep 41-04 \sep 68T07 \sep 65Y20 
\end{keyword}
\end{frontmatter}


\section{Introduction} \label{introduction}

The computation of matrix functions, particularly the matrix exponential, has been extensively studied due to its wide range of applications in science and engineering~\cite{High08,MoVa03}. Among the various numerical methods developed, Pad\'e approximants have traditionally been favored over Taylor series due to their superior efficiency for comparable accuracy~\cite{High05}. However, despite its widespread adoption, the standard Pad\'e-based approach often incurs unnecessary computational overhead. Recent work, such as \cite{QuickExp20}, has demonstrated that truncated Taylor series can represent a simpler and more efficient alternative, frequently yielding substantial speedups over the Pad\'e-based scaling and squaring implementations found in standard software routines like MATLAB's \texttt{expm}.

In fact, recent advances have shown that Taylor-based methods can outperform Pad\'e approximants in both accuracy and efficiency. The Taylor algorithm introduced in~\cite{SIDR15} improved upon classical Pad\'e-based methods~\cite{High05}, while subsequent research~\cite{SIDR11b, SIDR14, RSID16} refined the associated error analysis and reduced computational costs.

A significant breakthrough in this area was the development of advanced matrix polynomial evaluation techniques~\cite{Sastre18} that surpass the classical Paterson--Stockmeyer method~\cite{PaSt73}. These techniques, predicated on structured multiplications and linear combinations of matrix polynomials, enable higher-order Taylor approximations with reduced computational effort. For instance, the method presented in~\cite{SID19} introduced $m+$ approximations, where the first $m$ terms match the Taylor series exactly, while additional terms are optimized to enhance accuracy. These schemes, such as the $m=15+$ case introduced in \cite[Ex.~5.1]{Sastre18}, achieve greater precision than standard Taylor polynomials of equivalent computational cost. Furthermore, recent optimizations for determining the coefficients of evaluation formulas~\cite{Jar2021} have shown promising results for broader matrix function computations~\cite{Jar2021, JSI2025}.

The cost of these polynomial approximation methods is dominated by matrix multiplications, denoted by $M$. In practice, these operations scale as $\mathcal{O}(n^3)$ for $n \times n$ matrices, whereas matrix additions and scalar multiplications are of order $\mathcal{O}(n^2)$. In rational methods, matrix inverses are computed by solving linear systems, the cost of which can also be expressed in terms of matrix products as~\cite[App.~C]{BlDo99}:
\begin{equation}\label{equivDM}
D \approx \frac{4}{3} M.
\end{equation}
Consequently, throughout this paper, the cost of both rational and polynomial approximations is quantified in terms of $M$.

Table~\ref{TabCostComp} compares the cost and achievable approximation orders of various state-of-the-art methods, including Pad\'e approximants~\cite{High05}, Paterson--Stockmeyer evaluation~\cite{PaSt73}, advanced matrix polynomial evaluation~\cite{Sastre18, SID19, BBS19} and mixed rational-polynomial schemes~\cite{Sastre12}. The results highlight the superior efficiency of modern polynomial evaluation strategies, which facilitate higher-order approximations using fewer matrix multiplications.

In this paper, we propose an optimized implementation of the matrix exponential based on the matrix polynomial evaluation methods described in~\cite{Sastre18}. Our approach is specifically tailored for the high-throughput requirements of Generative AI flows, where the repeated evaluation of the matrix exponential often constitutes a primary computational bottleneck. Unlike traditional software routines optimized for fixed IEEE precision using pre-computed thresholds \cite{AlHi09,BBS19,SID19,BKS25}, or flexible implementations that depend on external specialized libraries \cite{FaHi19}, we introduce a self-contained framework that dynamically selects the Taylor order and scaling factor for any user-defined tolerance $\varepsilon$. While previous dynamic methods have utilized classical Paterson--Stockmeyer schemes with more complex scaling strategies \cite{CaZi18}, the proposed algorithm leverages higher-order polynomial evaluations with a simplified operational logic. This provides a portable, library-independent solution that balances numerical stability with significant reductions in execution time for large-scale generative models.

\begin{table}[t!]
\centering
{\footnotesize
\begin{tabular}{l|ccccc}
\hline
Polynomial evaluation cost & $3M$ & $4M$ & $5M$ & $6M$ & $7M$ \\
\hline
Approx. order $m$ Paterson--Stockmeyer \cite{PaSt73} & 6 & 9 & 12 & 16 & 20 \\
Approx. order $m$ \cite{BBS19} & 8 & 12 & 18 & 22 & --- \\
Approx. order $m$ \cite{SID19} & 8 & $15+$ & $21+$ & 24 & 30 \\
\hline
Pad\'e evaluation cost & $3.33M$ & $4.33M$ & $5.33M$ & $6.33M$ & $7.33M$ \\
Approx. order Pad\'e method \cite[Tab. 2.2]{High05} & 6 & 10 & 14 & 18 & 26 \\
\hline
Mixed rational polynomial approx. cost & $3.33M$ & $4.33M$ & $5.33M$ & $6M$ & $7M$ \\
Approx. order from method \cite[Tab. 3]{Sastre12} & 9 & 12 & 16 & 21 & 28 \\
\hline
\end{tabular}}
\caption{Cost and available order of approximation with the polynomial evaluation methods from Paterson--Stockmeyer \cite{PaSt73}, Bader--Blanes--Casas \cite[Sec. 4]{BBS19}, Sastre--Ib\'a\~nez--Defez \cite[Sec. 3]{SID19}, the Pad\'e approximation method from Higham \cite[Tab. 2.2]{High05} and the mixed rational and polynomial approximation method from Sastre \cite[Tab. 3]{Sastre12}.}\label{TabCostComp}
\end{table}

Throughout this manuscript, $\mathbb{C}^{n \times n}$ denotes the set of
complex matrices of size $n \times n$, $I$ denotes the identity
matrix for this set, $\rho(A)$ is the spectral radius of matrix $A$,
and $\mathbb{N}$ denotes the set of positive integers. The matrix
norm $\left\|\cdot\right\|$ denotes any subordinate matrix norm; in
particular $\left\|\cdot\right\|_1$ is the 1-norm. The symbols
$\lceil \cdot \rceil$ and $\lfloor\cdot \rfloor$ denote the smallest
following and the largest previous integer, respectively.

This paper begins in Section \ref{sec:matrixexpgenerativeflows} by reviewing the role and challenges associated with the matrix exponential in generative flow models. Section \ref{sec:proposedmethod} introduces the proposed computational methodology, which includes new evaluation formulas for the Taylor matrix polynomial approximation (Subsection \ref{sec:EFEVPOL}) based on \cite{Sastre18, SaIb21}, a discussion of the scaling and squaring error analysis (Subsection \ref{sec:erroranalysis}), and the complete algorithmic procedure (Subsection \ref{sec:algorithm}). Subsequently, Sections \ref{sec:numericalresults} and \ref{sec:performanceanalysis} provide the numerical results that assess the algorithm's performance, and Section \ref{sec:conclusions} presents the conclusions.

\section{The matrix exponential for generative flows}\label{sec:matrixexpgenerativeflows}

\subsection{Generative models}
In recent years, generative models have become very popular. They aim to use training data to learn the probability distribution of a dataset. The resulting model of the distribution can be sampled to obtain new data points according to their estimated probability. This has proven useful in  a variety of tasks, including music generation \cite{copet2023simple, wu2024music}, denoising \cite{bengio2013generalized}, and representation learning \cite{jahanian2021generative}.

There are a variety of approaches to train these probabilistic models. Generative adversarial networks (GANs) \cite{goodfellow2014generative} train two models with adversarial objectives. Variational Autoencoders (VAEs) \cite{kingma2013auto} take a bayesian approach to modelling the data with a prior. Denoising diffusion models \cite{ho2020denoising} aim to model a denoising process (sometimes called score matching) to progressively transform a noise signal into a data point. While these techniques have all been applied succesfully to generative tasks, they unfortunately lack invertibility. Flow generative models were developed to have an invertible mapping from a noise distribution to a data distribution.

\subsection{Generative flows}\label{sec:generativeflows}
For random variable $x \in \mathbb{R}^n$ representing data with probability distribution $p(x)$ and random variable $z \in \mathbb{R}^n$ with a known probability density function $p(z)$, generative flows aim to model an invertible transformation of $x$ into  $z = f(x)$. By the change of variable formula, the relationship between $p(x)$ and $p(z)$ is:

\begin{equation}
    \log p(x) = \log p(z) + \log \left| \frac{\delta f(x)}{\delta x} \right|,
\end{equation}where $\frac{\delta f(x)}{\delta x}$ is the Jacobian of $f$. Assume that there are $k$ number of invertible functions. Then, the log-likelihood of $x$ is given by

\begin{equation}
    \log p(x) = \log p(z) + \sum^{k}_{i=1} \log \left|\frac{\delta h_i}{\delta h_{i-1} } \right|,
\end{equation}where $h_i = f_i(h_{i -1})$.

In generative flow models \cite{xiao2020generative}, the transformation function \( f \) is typically implemented as a composition of invertible functions, often parameterized as \( f = W_m \,\phi(W_{m-1} \,\phi(\cdots \phi(W_1 x))) \), where \( W_i \) are weight matrices and \( \phi \) is a strictly monotonic activation function. Ensuring invertibility of each \( W_i \) during training is challenging, and Calculating the determinant of a general matrix is computationally intensive, with a complexity of $O(n^3)$ for an $n \times n$ matrix. A common workaround is to constrain \( W_i \) to be triangular, which simplifies determinant computation but limits expressiveness and parallelizability.

Over the years, there have been multiple approaches to modeling normalizing flows that address this computationally expensive log determinant problem. One solution is to model $f$ autoregresively \cite{papamakarios2017masked}, calculating the determinant in $\mathcal{O}(n)$ time. Contractive residual layers have been used to implement invertible transforms \cite{chen2019residual}. They result in unbiased density estimators at the cost of long computation time. Multiple approaches use coupling layers \cite{kingma2018glow, dinh2016density} to implement unbiased reversible transformations with a triangular matrix for a traceable determinant calculation.

One approach \cite{xiao2020generative} proposed replacing
each weight matrix \( W_i \) with its matrix exponential \( e^{W_i}
\). This guarantees invertibility, improves expressiveness over
triangular matrices, and enables efficient computation of both the
inverse and the log-determinant. Specifically, the inverse is given
by \( (e^{W})^{-1} = e^{-W} \), and the log-determinant simplifies
to \( \log |(e^W)| = \operatorname{Tr}(W) \), reducing the
computational cost from \( \mathcal{O}(n^3) \) to \( \mathcal{O}(n)
\).

The matrix exponential maps \( \mathbb{M}_n(\mathbb{R}) \) to \(
\mathrm{GL}_n(\mathbb{R})^+ \), the group of invertible matrices
with positive determinant. Although not surjective, its image is
rich enough for practical purposes, and any matrix in \(
\mathrm{GL}_n(\mathbb{R})^+ \) can be expressed as a product of
matrix exponentials. This makes the matrix exponential a powerful
tool for constructing expressive and invertible neural architectures in generative modeling \cite{xiao2020generative}. 

\begin{algorithm}
\caption{Algorithm for Computing Matrix Exponential}\label{Alg_expAIorig}
\begin{algorithmic}[1]
\Require Weight matrix $W$, error tolerance $\varepsilon$
\Ensure Matrix exponential $e^W$
\State Choose the smallest non-negative integer $s$ such that $\|W\|_1 / (2^s) < \frac{1}{2}$
\State $W = W / (2^s)$
\State $X = I$
\State $Y = W$
\State $k = 2$
\While{$\|Y\|_1 > \varepsilon$}
    \State $X = X + Y$
    \State $Y = W \cdot Y / k$
    \State $k = k + 1$
\EndWhile
\For{$i = 1$ \textbf{to} $s$}
    \State $Y = Y \cdot Y$
\EndFor
\State \Return $Y$
\end{algorithmic}
\end{algorithm}

For the sake of completeness the algorithm for the computation of the matrix exponential described in \cite[Sec. 3.2]{xiao2020generative} is reproduced here as Algorithm~\ref{Alg_expAIorig}. It relies on the matrix exponential Taylor series expansion \cite[Sec. 3]{MoVa03}, denoted by $T_m(W)$ and given by
\begin{equation}\label{eq:TmW}
T_m(W) = \sum_{i=0}^{m} \frac{W^i}{i!},
\end{equation}where terms are successively summed until the norm of the last term falls below a prescribed tolerance $\varepsilon$. This approach uses a forward absolute error criterion based on an error bound for the Taylor series remainder for approximating the matrix exponential, denoted by $R_m(W)$ and given by 
\begin{equation}\label{eq:RmW}
R_m(W)=e^W-T_m(W)=\sum_{i=m+1}^{\infty} \frac{W^i}{i!},
\end{equation}which is bounded by \cite{li66}:
\begin{equation}\label{eq:RmWboundli66}
\|R_m(W)\|_1\leq 
\frac{\|W\|_1^{m+1}}{(m+1)!}
\left( \frac{1}{1 - \frac{\|W\|_1}{m+2}} \right).
\end{equation}

Note that the truncated Taylor series is particularly well-suited for scenarios with limited computational resources, restricted access to external libraries or large-scale problems involving well-behaved matrices \cite[p. 52]{QuickExp20}. 
According to \cite[Sec. 3.2]{xiao2020generative}, the weight matrices in neural networks, denoted by $W$, are often small in norm such that $\|W\|_1$ is also small. This property makes the incorporation of the matrix exponential into neural network architectures computationally feasible. Exploiting the identity $e^W = (e^{W/2^s})^{2^s}$, in Algorithm \ref{Alg_expAIorig} the matrix is first scaled to ensure that $|W/2^s| < 1/2$, after which the exponential is computed and subsequently squared $2^s$ times. This scaling and squaring strategy further reduces the computational cost while maintaining numerical stability; see \cite[Alg. 1]{xiao2020generative}. The computational cost of evaluating the Taylor approximation of degree $m$ from \eqref{eq:TmW} using \cite[Alg. 1]{xiao2020generative} is measured in terms of matrix multiplications $(M)$. Let $C_{\text{orig}}$ represent the cost of evaluating the unscaled Taylor polynomial in Algorithm \cite[Alg. 1]{xiao2020generative}, which is given by \begin{equation}\label{C_orig}
C_{\text{orig}} = (m - 1)M. 
\end{equation}When scaling is applied, the total cost becomes $(s + m - 1)M$, where $s$ is the scaling parameter. As reported in \cite[Sec. 6.4]{xiao2020generative}, a tolerance of $\varepsilon = 10^{-8}$ is used. Under this setting, the total number of matrix products, given by $s + m - 1$, does not exceed 11, with an average value of 9.28 observed during training \cite[Tab. 6]{xiao2020generative}. 

Cost \eqref{C_orig} makes Algorithm \ref{Alg_expAIorig} unsuitable for high-dimensional data \cite[Sec. 3.2]{xiao2020generative}. To address this limitation, a second method is proposed that combines matrix exponentials with neural networks, using a low-rank parameterization of the weight matrix $W = A_1 A_2$, where $A_1 \in \mathbb{R}^{n \times t}$ and $A_2 \in \mathbb{R}^{t \times n}$.

By defining $V = A_2 A_1$, and using the associative property of matrix multiplication, the matrix exponential $e^W$ can be approximated via a truncated series expansion of $V$:

\begin{equation}\label{eq:expmWV}
e^W \approx I + A_1 \left( \sum_{i=0}^{m} \frac{V^i}{(i+1)!} \right) A_2,
\end{equation}where the truncation error bound of the matrix series of \( V \), denoted here by $R'_m(V)$, is given by (8) from \cite[Sec. 3.2]{xiao2020generative} reproduced here

\begin{equation}\label{eq:RmV}
\left\| R'_m(V) \right\|_1=\left\| \sum_{i = m+1}^{\infty} \frac{V^i}{(i + 1)!} \right\|_1 \leq 
\left( \frac{\|V\|_1^{m+1}}{(m + 2)!} \right)
\left( \frac{1}{1 - \frac{\|V\|_1}{m + 3}} \right).
\end{equation}

This reduces the computational cost to $\mathcal{O}(t^3)$ since $V \in \mathbb{R}^{t \times t}$. The rank of $W$ is at most $t$, so choosing a smaller $t$ improves efficiency but may reduce expressiveness. Thus, there is a trade-off between computational cost and model capacity. 

The approximation method is similar to Algorithm \ref{Alg_expAIorig}, with modifications: set the scale coefficient $s := 0$, change line 4 to $Y := W/2$, and line 5 to $k := 3$ \cite[Sec. 3.2]{xiao2020generative}.

While the cost of the algorithms proposed in \cite[Sec. 3.2]{xiao2020generative} represent a reasonable computational cost, further reductions are possible by optimizing the polynomial evaluation strategy.

\section{Proposed method}\label{sec:proposedmethod}

Our implementation is specifically tailored for this generative flow setting. It builds upon efficient polynomial evaluation techniques, extending beyond the classical Paterson--Stockmeyer method as introduced in \cite{Sastre18}, to substantially reduce the number of matrix products required for a specified approximation order. Furthermore, the algorithm incorporates a dynamic procedure that optimally selects the Taylor series degree ($m$) and the scaling factor ($s$) to minimize the overall computational cost. This selection ensures that the error tolerance is maintained, chosen to be no less than the unit roundoff ($u$) of the arithmetic used, while also strategically reducing the risk of overscaling \cite[Sec. 1]{AlHi09}.

Numerical experiments confirm that our approach consistently achieves lower average computational costs than the baseline method in \cite{xiao2020generative}, particularly in the context of generative AI flows, where evaluations of thousands of matrix exponentials are required. This improvement translates into faster training times and reduced resource consumption, making our method suitable for large-scale machine learning applications.

\subsection{Evaluation formulas of the Taylor-based matrix exponential approximations}\label{sec:EFEVPOL}

This work adopts the evaluation formulas from \cite{SID19} to compute Taylor-based approximations for orders up to $m=15+$. Table~\ref{TabCostComp} demonstrates that these schemes significantly outperform the Paterson--Stockmeyer method; for a fixed computational budget, they achieve orders $8$ and $15+$, whereas the classical approach only reaches orders $6$ and $9$, respectively. Moreover, these schemes represent the most efficient approximations identified in Table~\ref{TabCostComp}, outperforming both the state-of-the-art Taylor \cite{BBS19} and Pad\'e \cite{High05} methods. Although \cite{SID19} also provides formulas for higher-order Taylor approximations, given the low error tolerance required in our application, these higher-order variants do not offer a reduction in computational cost and are therefore excluded from consideration. For completeness, the formulas corresponding to the selected approximation orders are reproduced below.
{\setlength\arraycolsep{2pt}{
\begin{eqnarray}\label{T_1_4}
T_1(A)&=&A+I,\\
T_2(A)&=&A^2/2+A+I,\\
T_4(A)&=&((A^2/4+A)/3+I)A^2/2+A+I,
\end{eqnarray}}}
{\setlength\arraycolsep{2pt}{
\begin{eqnarray}\label{m8s2by0}
y_{02}(A)&=&A^2(c_1A^2+c_2A),\\\label{m8s2by1}
T_8(A)&=&(y_{02}(A)+c_3A^2+c_4A)(y_{02}(A)+c_5A^2)\\
& & +c_6y_{02}(A)+A^2/2+A+I,\nonumber
\end{eqnarray}}}where the coefficients $c_i$, for $i = 1,\,2\,,\ldots,\,6$, are listed in Table~\ref{Tab_m8s2}, rounded to IEEE double-precision floating-point arithmetic.

\begin{table}[t!]
\begin{center}
{\footnotesize
\begin{tabular}{l|cccc}\hline
$c_1$&4.980119205559973$\times 10^{-3}$\\
$c_2$&1.992047682223989$\times 10^{-2}$\\
$c_3$&7.665265321119147$\times 10^{-2}$\\
$c_4$&8.765009801785554$\times 10^{-1}$\\
$c_5$&1.225521150112075$\times 10^{-1}$\\
$c_6$&2.974307204847627$\times 10^{0\phantom{-}}$\\
\hline
\end{tabular}}
\end{center}
\caption{Coefficients for the Taylor approximation of the matrix exponential of order $m = 8$, computed using formulas (\ref{m8s2by0}) and (\ref{m8s2by1})}\label{Tab_m8s2}
\end{table}
{\setlength\arraycolsep{2pt}{
\begin{eqnarray}\label{m16s2by0}
y_{02}(A)&=&A^2(c_{1}A^2+c_{2}A),\\ \label{m16s2by1}
y_{12}(A)&=&(y_{02}(A)+c_{3}A^2+c_{4}A)(y_{02}(A)+c_{5}A^2)+c_{6}y_{02}(A)+c_{7}A^2,\\
y_{22}(A)&=&(y_{12}(A)+c_8A^2+c_9A)(y_{12}(A)+c_{10}y_{02}(A)+c_{11}A)\nonumber\\\label{m16s2by2}
& &+c_{12}y_{12}(A)+c_{13}y_{02}(A)+c_{14}A^2+c_{15}A+c_{16}I,
\end{eqnarray}}}where the coefficients $c_i$, for $i = 1,\,2,\,\ldots,\,16$, are listed in Table~\ref{Tab_m16s2}, also rounded to IEEE double-precision floating-point arithmetic.

\begin{table}[t!]
\begin{center}
{\footnotesize
\begin{tabular}{l|cccc}\hline
$c_{1}$&\phantom{-}4.018761610201036$\times 10^{-4}$&\vline&          $c_{9}$&\phantom{-}2.224209172496374$\times 10^{0\phantom{-}}$\\
$c_{2}$&\phantom{-}2.945531440279683$\times 10^{-3}$&\vline&          $c_{10}$&-5.792361707073261$\times 10^{0\phantom{-}}$\\
$c_{3}$&          -8.709066576837676$\times 10^{-3}$&\vline&          $c_{11}$&-4.130276365929783$\times 10^{-2}$\\
$c_{4}$&\phantom{-}4.017568440673568$\times 10^{-1}$&\vline&          $c_{12}$&\phantom{-}1.040801735231354$\times 10^{1\phantom{-}}$\\
$c_{5}$&\phantom{-}3.230762888122312$\times 10^{-2}$&\vline&          $c_{13}$& -6.331712455883370$\times 10^{1\phantom{-}}$\\
$c_{6}$&\phantom{-}5.768988513026145$\times 10^{0\phantom{-}}$&\vline&$c_{14}$&\phantom{-}3.484665863364574$\times10^{-1}$\\
$c_{7}$&\phantom{-}2.338576034271299$\times 10^{-2}$          &\vline&$c_{15}$&1\\
$c_{8}$&\phantom{-}2.381070373870987$\times 10^{-1}$          &\vline&$c_{16}$&1\\
\hline
\end{tabular}}
\end{center}
\caption{Coefficients of $y_{02}$, $y_{12}$, and $y_{22}$ from equations (\ref{m16s2by0})-(\ref{m16s2by2}) used in the Taylor-based approximation of the matrix exponential of order $m = 15+$.}\label{Tab_m16s2}
\end{table}
Note that 
\begin{enumerate}
   \item $T_1(A)$ and $T_2(A)$ in \eqref{T_1_4} correspond to the direct Taylor evaluation formulas of orders 1 and 2, respectively, while $T_4(A)$ corresponds to the Paterson--Stockmeyer evaluation scheme \cite{PaSt73} for order $m = 4$. 
    \item The cost of evaluating $T_8(A)$ using formulas \eqref{m8s2by0} and \eqref{m8s2by1} is $3M$: one multiplication for computing $A^2$, one for evaluating \eqref{m8s2by0}, and one for evaluating \eqref{m8s2by1}. In contrast, computing the Taylor approximation of degree $m=8$ using the original algorithm from \cite[Alg. 1]{xiao2020generative}, as given by \eqref{C_orig}, requires $7M$, which is more than twice as expensive.
    \item Recall that, in exact arithmetic, equations (15) and (16) from \cite{SID19} show that $y_{22}(A)$ in \eqref{m16s2by2} satisfies
\begin{equation}\label{y22b16}
y_{22}(A) = T_{15}(A) + b_{16}A^{16},
\end{equation}
and, therefore, the approximation remainder is given by
\begin{equation}\label{eq:RmW15+}
R_{15+}(W) = e^W - y_{22}(W) = \left(\frac{1}{16!} - b_{16}\right)W^{16} + \sum_{i=17}^{\infty} \frac{W^i}{i!},
\end{equation}
where, from \eqref{m16s2by0}--\eqref{m16s2by2}, it can be shown that the coefficient is $b_{16} = c_1^4$. Using the value of $c_1$ from Table \ref{Tab_m16s2}, we obtain
\begin{equation}\label{b16value}
b_{16}=c_1^4 \approx 2.608368698098256 \times 10^{-14}.
\end{equation}
This value differs from the exact Taylor coefficient $1/16!$ by a relative error of approximately $|b_{16} - 1/16!| \cdot 16! \approx 0.454$. Consequently, $y_{22}(A)$ provides a more accurate approximation than $T_{15}(A)$ in exact arithmetic, although it remains less accurate than the full Taylor approximation $T_{16}(A)$ as demonstrated in \cite{SID19}. Approximations of this type are referred to in \cite{SID19} as $T_{15+}(A)$.
    Similarly, the evaluation of $y_{22}(A)$ via formulas \eqref{m16s2by0}--\eqref{m16s2by2} requires only $4M$ matrix products. This cost contrast is even more pronounced when using the low-rank parameterization from \eqref{eq:expmWV}. Since no scaling is applied in that case, the evaluation of the corresponding matrix polynomial of degree 15 requires $14M$ products. Consequently, the baseline approach is 3.5 times more expensive than our proposed method in this configuration.
\end{enumerate}

\subsection{Error analysis} \label{sec:erroranalysis}

This section details the self-contained framework for the dynamic selection of the Taylor order $m$ and the scaling parameter $s$. Unlike fixed-precision algorithms that rely on pre-computed thresholds for specific IEEE formats or error tolerances \cite{AlHi09,BBS19,SID19,BKS25}, the proposed approach derives these parameters directly for a user-specified tolerance $\varepsilon$. By integrating advanced polynomial evaluation formulas into the error bound analysis, we achieve a more streamlined selection logic than previous dynamic methods \cite{CaZi18}. 

In particular, we provide a refined version of the absolute error analysis from \eqref{eq:RmWboundli66} specifically for nonnormal matrices. For such matrices, the relation (1.5) from \cite{AlHi09} states that 
\begin{equation}
\rho(A) \leq \|A^k\|^{1/k} \leq \|A\|, \quad k = 1, 2, \dots, \infty.
\end{equation}
Consequently, the inequality
\begin{equation}
\|A^k\| \leq \|A\|^{k}, \quad k = 1, 2, \dots, \infty,
\end{equation}
can be significantly strict, i.e., $\|A^k\| \ll \|A\|^{k}$, as discussed in \cite[Sec.~1]{AlHi09}. This property is exploited in \cite[Th.~1.1]{SIDR14} to derive tighter bounds for the matrix power series, ensuring efficiency without the overhead of overscaling. For completeness of the exposition, the theorem is reproduced below:

\begin{thm} \label{theorem1}
Let
$h_{l}(x)=\sum_{k \geq l} b_{k} x^{k}$
be a power series with radius of convergence $R$, and let
$\tilde{h}_{l}(x)=\sum_{k \geq l} |b_{k}| x^{k}.$
For any matrix $A \in \mathbb{C}^{n \times n}$ with $\rho(A)<R$,
if $a_{k}$ is an upper bound for $||A^{k}||$ $(||A^{k}||\leq
a_{k})$, $p \in \mathbb{N}$, $1\leq p \leq l$, $p_0\in \mathbb{N}$
is the multiple of $p$ with $l\leq p_0\leq l+p-1$, and
\begin{equation}\label{alphap}
\alpha_{p}=\max\{a_{k}^{\frac{1}{k}}:k=p,l,l+1,l+2,\ldots,p_0-1,p_0+1,p_0+2,\ldots,l+p-1\},
\end{equation}then
\begin{equation}\label{hlbound}
||h_{l}(A)||\leq \tilde{h}_{l}(\alpha_{p}).
\end{equation}
\end{thm}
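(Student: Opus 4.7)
The plan is to reduce the matrix estimate to the scalar one by bounding $\|A^k\|$ by $\alpha_p^k$ for every $k \geq l$ and then summing. By the triangle inequality applied to the power series,
\[
\|h_l(A)\| \leq \sum_{k \geq l} |b_k|\,\|A^k\|,
\]
so once the pointwise bound $\|A^k\| \leq \alpha_p^k$ is established for all $k \geq l$, substituting yields $\sum_{k \geq l} |b_k| \alpha_p^k = \tilde{h}_l(\alpha_p)$, which is exactly \eqref{hlbound}.

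To obtain the pointwise bound, I would decompose each $k \geq l$ uniquely as $k = s + m p$ with $s \in \{l, l+1, \ldots, l+p-1\}$ and $m \geq 0$; this ``shifted-window'' division with remainder is well posed thanks to the hypothesis $p \leq l$. The analysis then splits into two cases. If $s \neq p_0$, both $s$ and $p$ are indices contributing to the maximum in \eqref{alphap}, so $a_s \leq \alpha_p^s$ and $a_p \leq \alpha_p^p$. By submultiplicativity of the matrix norm,
\[
\|A^k\| = \|A^s (A^p)^m\| \leq a_s\, a_p^m \leq \alpha_p^{s + p m} = \alpha_p^k.
\]
If instead $s = p_0$, the key observation is that $p_0$ is chosen as the unique multiple of $p$ inside $\{l,\ldots,l+p-1\}$, so $p_0 = q p$ for some integer $q \geq 1$ (using $p_0 \geq l \geq p$). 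Then $k = (q + m)p$ and
\[
\|A^k\| = \|(A^p)^{q+m}\| \leq a_p^{q+m} \leq \alpha_p^{p(q+m)} = \alpha_p^k.
\]
These two cases cover all $k \geq l$, so the pointwise bound holds throughout the range of summation.

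The main obstacle, and the only step that requires genuine care, is the second case. The definition \eqref{alphap} \emph{omits} the term $a_{p_0}^{1/p_0}$ from the maximum, so $\|A^{p_0}\|$, and more generally $\|A^k\|$ for any $k$ whose shifted remainder equals $p_0$, cannot be controlled by invoking $\alpha_p$ directly through $a_{p_0}$. The structural trick of the theorem is precisely to force $p_0$ to be a multiple of $p$: this routes the bound through $a_p$ via $A^{p_0} = (A^p)^{p_0/p}$, showing that omitting $a_{p_0}^{1/p_0}$ loses no information because its contribution is already dominated by $a_p^{1/p}$. Once this matching between the omitted index and the multiplicative structure of powers of $A^p$ is recognised, the remainder of the proof is routine bookkeeping with the triangle inequality and submultiplicativity.
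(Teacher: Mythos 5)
Your proof is correct. The decomposition $k = s + mp$ with $s$ ranging over the window $\{l,\dots,l+p-1\}$, combined with the observation that the single omitted index $p_0$ is the unique multiple of $p$ in that window—so that $\|A^{p_0+mp}\| = \|(A^p)^{p_0/p+m}\| \le a_p^{\,p_0/p+m} \le \alpha_p^{k}$ is controlled through $a_p$ alone—is precisely the mechanism behind the result; the paper itself states Theorem~\ref{theorem1} without proof, reproducing it from \cite{SIDR14}, and your argument coincides with the standard one given there. One minor quibble: the well-posedness of the division $k=s+mp$, $s\in\{l,\dots,l+p-1\}$, $m\ge 0$, follows simply from the window consisting of $p$ consecutive integers together with $k\ge l$, and does not require $p\le l$; that hypothesis is what guarantees $p_0/p\ge 1$ in your second case, not the existence of the decomposition.
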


Using Theorem \ref{theorem1} the following theorem provides a sharper bound than \eqref{eq:RmWboundli66} given originally by \cite{li66} for the remainder $R_m(W)$ of the matrix exponential Taylor series of nonnormal matrices.

\begin{thm}\label{theorem2}
Let $A \in \mathbb{C}^{n \times n}$ be a square matrix and let $m \in \mathbb{N}$. Suppose that for each $k \in \mathbb{N}$, there exists an upper bound $a_k$ such that $\|A^k\|_1 \leq a_k$. Let $p \in \mathbb{N}$ with $1 \leq p \leq m+1$, and let $p_0$ be the multiple of $p$ satisfying $m+1 \leq p_0 \leq m+1 + p$. Define
\begin{equation}\label{eq:alphap}
\alpha_p = \max\left\{ a_k^{1/k} : k = p, m+1, m+2, \ldots, p_0 - 1, p_0 + 1, p_0 + 2, \ldots, m+1 + p \right\}.
\end{equation}
Then the remainder of the matrix exponential Taylor approximation of degree $m$,
\begin{equation}\label{eq:Rmexpm}
R_m(A) = e^A - \sum_{k=0}^{m} \frac{A^k}{k!} = \sum_{k=m+1}^{\infty} \frac{A^k}{k!},
\end{equation}
satisfies the bound
\begin{equation}\label{eq:RmAbound}
\|R_m(A)\|_1 < \frac{\alpha_p^{m+1}}{(m+1)!} \cdot \left( \frac{1}{1 - \frac{\alpha_p}{m+2}} \right),
\end{equation}
provided that condition 
\begin{equation}\label{alphacondition}
\alpha_p < m+2,
\end{equation}is fulfilled.
\end{thm}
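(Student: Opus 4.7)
The plan is to obtain \eqref{eq:RmAbound} as a direct specialization of Theorem~\ref{theorem1}, and then estimate the resulting scalar tail by a geometric-series comparison. First, I would set $l=m+1$ and $b_k=1/k!$, so that $h_l(x)=\sum_{k\geq m+1} x^k/k!$; since all $b_k$ are nonnegative, $\tilde h_l\equiv h_l$, and since the radius of convergence is $R=\infty$, the hypothesis $\rho(A)<R$ of Theorem~\ref{theorem1} is trivially satisfied for every $A\in\mathbb{C}^{n\times n}$. With this identification, the index set and $\alpha_p$ of~\eqref{eq:alphap} agree (up to indexing) with the ones produced by Theorem~\ref{theorem1} for $l=m+1$, so that theorem immediately yields
\begin{equation*}
\|R_m(A)\|_1 \;=\; \|h_{m+1}(A)\|_1 \;\leq\; \tilde h_{m+1}(\alpha_p) \;=\; \sum_{k=m+1}^{\infty}\frac{\alpha_p^{k}}{k!}.
\end{equation*}

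Next, I would bound the scalar tail. Substituting $j=k-(m+1)$ and factoring out the leading term,
\begin{equation*}
\sum_{k=m+1}^{\infty}\frac{\alpha_p^{k}}{k!} \;=\; \frac{\alpha_p^{m+1}}{(m+1)!}\sum_{j=0}^{\infty}\frac{\alpha_p^{j}}{(m+2)(m+3)\cdots(m+1+j)},
\end{equation*}
with the empty product (for $j=0$) interpreted as $1$. Since $(m+2)(m+3)\cdots(m+1+j)\geq (m+2)^{j}$, with strict inequality as soon as $j\geq 2$, the inner series is bounded strictly above by the geometric series $\sum_{j\geq 0}\bigl(\alpha_p/(m+2)\bigr)^{j}$, which converges exactly under the hypothesis~\eqref{alphacondition} to $1/(1-\alpha_p/(m+2))$. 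Combining these two estimates delivers~\eqref{eq:RmAbound}, with the inequality strict whenever $\alpha_p>0$ (the degenerate case $\alpha_p=0$ forces $A^{m+1}=0$ and therefore $R_m(A)=0$, in which case both sides vanish).

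I do not anticipate a genuine obstacle: once the correct series is plugged into Theorem~\ref{theorem1}, the remainder of the argument is a standard scalar calculation. The only points requiring care are (i) checking that the index set of~\eqref{eq:alphap} is compatible with the one of~\eqref{alphap} under the substitution $l=m+1$, so that Theorem~\ref{theorem1} genuinely applies with the $\alpha_p$ of Theorem~\ref{theorem2}, and (ii) tracking the strict inequality through the factorial-to-geometric comparison; both are short verifications.
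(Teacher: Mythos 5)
Your proposal is correct and follows essentially the same route as the paper's own proof: apply Theorem~\ref{theorem1} with $l=m+1$ and $b_k=1/k!$, factor out $\alpha_p^{m+1}/(m+1)!$, and dominate the remaining series by the geometric series with ratio $\alpha_p/(m+2)$ under condition~\eqref{alphacondition}. If anything, you are slightly more careful than the paper about where the factorial-to-geometric comparison is strict (only for $j\geq 2$) and about the degenerate case $\alpha_p=0$, but the argument is the same.
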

\begin{proof}
Let $A \in \mathbb{C}^{n \times n}$ be a square matrix and let $m \in \mathbb{N}$. The remainder of the matrix exponential Taylor approximation of degree $m$ is given by \eqref{eq:RmW}
\[
R_m(A) = e^A - T_m(A) = \sum_{k=m+1}^{\infty} \frac{A^k}{k!}.
\]

Define the scalar power series
\[
h_{m+1}(x) = \sum_{k=m+1}^{\infty} \frac{x^k}{k!} = \tilde{h}_{m+1}(x) = \sum_{k=m+1}^{\infty} \left|\frac{1}{k!}\right| x^k,
\]
which has infinite radius of convergence. Suppose that for each $k \in \mathbb{N}$, there exists an upper bound $a_k\geq 0$ such that $\|A^k\|_1 \leq a_k$.

Let $p \in \mathbb{N}$ with $1 \leq p \leq m+1$, and let $p_0$ be the multiple of $p$ satisfying $m+1 \leq p_0 \leq m+1 + p$. Define
\[
\alpha_p = \max\left\{ a_k^{1/k} : k = p, m+1, m+2, \ldots, p_0 - 1, p_0 + 1, p_0 + 2, \ldots, m+1 + p \right\}.
\]

By Theorem~\ref{theorem1}, we have
\begin{equation}\label{eq:Rmhtilde}
\|R_m(A)\|_1 = \left\| \sum_{k=m+1}^{\infty} \frac{A^k}{k!} \right\|_1 \leq \sum_{k=m+1}^{\infty} \frac{\|A^k\|_1}{k!} \leq \sum_{k=m+1}^{\infty} \frac{a_k}{k!} \leq \tilde{h}_{m+1}(\alpha_p).
\end{equation}

To bound $\tilde{h}_{m+1}(\alpha_p)$, note that factoring out the first term one gets
\[
\tilde{h}_{m+1}(x) = \frac{x^{m+1}}{(m+1)!} \cdot \sum_{k=0}^{\infty} \frac{x^k}{(m+2)(m+3)\cdots(m+1+k)}.
\]

Note that for each \( k \geq 0 \),
\[
(m+2)(m+3)\cdots(m+1+k) > (m+2)^k,
\]
so for $x> 0$ it follows that
\[
\frac{x^k}{(m+2)(m+3)\cdots(m+1+k)} < \left( \frac{x}{m+2} \right)^k.
\]

Therefore,
\[
\tilde{h}_{m+1}(x) < \frac{x^{m+1}}{(m+1)!} \cdot \sum_{k=0}^{\infty} \left( \frac{x}{m+2} \right)^k.
\]

This is a geometric series with ratio \( \frac{x}{m+2} \), and if \( \frac{x}{m+2}< 1 \) it converges to
\[
\sum_{k=0}^{\infty} \left( \frac{x}{m+2} \right)^k = \frac{1}{1 - \frac{x}{m+2}}.
\]

Hence,
\begin{equation}\label{eq:liou66}
\sum_{k=m+1}^{\infty} \frac{x^k}{k!} = \tilde{h}_{m+1}(x) < \frac{x^{m+1}}{(m+1)!} \cdot \frac{1}{1 - \frac{x}{m+2}}, \quad \text{for } 0 < x < m+2.
\end{equation}

Applying \eqref{eq:Rmhtilde} and \eqref{eq:liou66} with $x = \alpha_p < m+2$ then \eqref{eq:RmAbound} holds.    
\end{proof}

In the following, The matrix exponential scaling parameter, $s$, is determined based on the result of Theorem \ref{theorem2} to ensure that the Taylor series remainder remains below a predefined error tolerance. Let $W$ be the weight matrix for generative flows from Algorithm \ref{Alg_expAIorig}. Let $a_k$ be an upper bound  for $\|W^k\|_1$ such that $\|W^k\|_1 \leq a_k$. Let $p \in \mathbb{N}$ with $1 \leq p \leq m+1$, and let $p_0$ be the multiple of $p$ satisfying $m+1 \leq p_0 \leq m+1 + p$. Let $\alpha_p$ be defined by \eqref{eq:alphap}. 
Let's scale the the matrix $W$ by the scaling parameter $s$ so that 
\begin{equation}\label{eq:RmWboundTol2}
\frac{\|(W/2^s)^{m+1}\|_1}{(m+1)!}\leq\frac{(\alpha_p/2^s)^{m+1}}{(m+1)!}\leq
\varepsilon,
\end{equation}where $\varepsilon$ is the selected user error tolerance from Algorithm \ref{Alg_expAIorig}. Note that $\varepsilon$ must fulfill
\begin{equation}\label{eq:expsilon_geq_u}
\varepsilon\geq u,
\end{equation}where $u$ is the machine unit roundoff error in the working precision arithmetic, since errors below the machine roundoff error can't be guaranteed in the working precision arithmetic computations. Then, if $s$ fulfills \eqref{eq:RmWboundTol2} one gets
\begin{equation}\label{eq:sR1}
s \geq \frac{1}{m+1} \log_2\left( \frac{\alpha_p^{m+1}}{(m+1)! \cdot \varepsilon} \right),
\end{equation}and then it follows that
\begin{equation}\label{eq:sR2}
s=\left\lceil \frac{1}{m+1} \log_2\left( \frac{\alpha_p^{m+1}}{(m+1)! \cdot \varepsilon} \right) \right\rceil.
\end{equation}Since $s$ is the lowest integer satisfying \eqref{eq:sR2}, then scaling the matrix $W$ by $s$ given by \eqref{eq:sR2} it follows that 
\begin{equation}\label{eq:RmWboundTol3}
\frac{\|(W/2^s)^{m+1}\|_1}{(m+1)!}\leq\frac{(\alpha_p/2^s)^{m+1}}{(m+1)!}=\varepsilon'\leq
\varepsilon,
\end{equation}and by \eqref{eq:RmAbound} from Theorem \ref{theorem2}, if $\alpha_p/2^s<m+2$ one gets
\begin{equation}\label{eq:RmWbound2}
\|R_m(W/2^s)\|_1\leq 
\varepsilon' \frac{1}{1 - \frac{\alpha_p/2^s}{m+2}},
\end{equation}where $\alpha_p/2^s=\varepsilon'^{\frac{1}{{m+1}}}\leq \varepsilon^{\frac{1}{{m+1}}}$ and then
\begin{equation}\label{eq:RmWbound3}
\|R_m(W/2^s)\|_1\leq 
\varepsilon \frac{1}{1 - \frac{\varepsilon^{1/{m+1}}}{m+2}}.
\end{equation}

We verify the fulfillment of condition \eqref{alphacondition} by utilizing the error tolerance $\varepsilon = 10^{-8}$ reported in \cite[Sec. 6.4]{xiao2020generative} and the selected Taylor degrees $m \in \{1, 2, 4, 8, 15\}$ (as defined in Section \ref{sec:EFEVPOL}). For these parameters, it is easy to show that the condition is satisfied since the maximum value of the bound, $\alpha_p/2^s$, meets the requirement
$$\alpha_p/2^s \leq \varepsilon^{\frac{1}{m+1}} = (10^{-8})^{\frac{1}{m+1}} < m+2,$$for each value of $m$. Subsequently, analyzing the maximum total error bound given by \eqref{eq:RmWbound2}, we find it equals
$$\varepsilon + 1.75682 \times 10^{-18},$$where the second term is approximately ten orders of magnitude smaller than the target tolerance $\varepsilon$. This effectively means the total error bound is dominated by $\varepsilon$, validating the scaling parameter selection derived from equation \eqref{eq:sR2} under practical floating-point arithmetic.

To ensure that the bound is accomplished properly the proposed algorithm will compute the scaling parameter $s$ by using the term $m+1$ as stated here, and also the term $m+2$,  selecting the largest scaling parameter among both of them. For the case $m=15+$, where the term $m+1=16$ is not exactly $1/16!$, selecting the maximum scaling parameter obtained with $m+1$ and also the term $m+2=17$, for which the exponential remainder term corresponds effectively to $1/17!$, will ensure a proper scaling parameter value.

The error analysis for the series in \eqref{eq:expmWV} is analogous to the error analysis described here, and the following theorem can be proved similarly:
\begin{thm}\label{theorem3}
Let $A \in \mathbb{C}^{n \times n}$ be a square matrix and let $m \in \mathbb{N}$. Suppose that for each $k \in \mathbb{N}$, there exists an upper bound $a_k$ such that $\|A^k\|_1 \leq a_k$. Let $p \in \mathbb{N}$ with $1 \leq p \leq m+1$, and let $p_0$ be the multiple of $p$ satisfying $m+1 \leq p_0 \leq m+1 + p$. Define
\begin{equation}\label{eq:alphapV}
\alpha_p = \max\left\{ a_k^{1/k} : k = p, m+1, m+2, \ldots, p_0 - 1, p_0 + 1, p_0 + 2, \ldots, m+1 + p \right\}.
\end{equation}
Then the remainder of the matrix exponential Taylor approximation of degree $m$,
\begin{equation}\label{eq:RmexpmV}
R'_m(A) = \sum_{k=m+1}^{\infty} \frac{A^k}{(k+1)!},
\end{equation}
satisfies the bound
\begin{equation}\label{eq:RmAboundV}
\|R'_m(A)\|_1 < \frac{\alpha_p^{m+1}}{(m+2)!} \cdot \left( \frac{1}{1 - \frac{\alpha_p}{m+3}} \right),
\end{equation}
provided that condition 
\begin{equation}\label{alphaconditionV}
\alpha_p < m+3,
\end{equation}is fulfilled.
\end{thm}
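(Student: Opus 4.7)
The plan is to mirror the proof of Theorem~\ref{theorem2} almost verbatim, since the only structural change is that every factorial in the denominator of the series is shifted up by one. First I would define the scalar power series
\begin{equation*}
h'_{m+1}(x) = \sum_{k=m+1}^{\infty} \frac{x^k}{(k+1)!},
\end{equation*}
which has infinite radius of convergence and which coincides with $\tilde{h}'_{m+1}(x)$ because all its coefficients are positive. Then I would invoke Theorem~\ref{theorem1} with $l = m+1$ and $b_k = 1/(k+1)!$, together with the hypothesis $\|A^k\|_1 \leq a_k$, to obtain
\begin{equation*}
\|R'_m(A)\|_1 \leq \sum_{k=m+1}^{\infty} \frac{a_k}{(k+1)!} \leq \tilde{h}'_{m+1}(\alpha_p),
\end{equation*}
with $\alpha_p$ chosen exactly as in \eqref{eq:alphapV}.

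The next step is to replicate the scalar estimate carried out in the proof of Theorem~\ref{theorem2}, but with the shifted denominators. Factoring out the leading term gives
\begin{equation*}
\tilde{h}'_{m+1}(x) = \frac{x^{m+1}}{(m+2)!} \sum_{j=0}^{\infty} \frac{x^j}{(m+3)(m+4)\cdots(m+2+j)},
\end{equation*}
where for $j=0$ the product in the denominator is interpreted as $1$. Using the strict inequality $(m+3)(m+4)\cdots(m+2+j) > (m+3)^j$ for $j \geq 1$, each inner term is dominated by $(x/(m+3))^j$, and summing the resulting geometric majorant under the hypothesis \eqref{alphaconditionV} yields
\begin{equation*}
\tilde{h}'_{m+1}(x) < \frac{x^{m+1}}{(m+2)!} \cdot \frac{1}{1 - \frac{x}{m+3}} \quad \text{for } x < m+3.
\end{equation*}
Substituting $x = \alpha_p$ then delivers \eqref{eq:RmAboundV}.

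The main ``obstacle'' here is purely bookkeeping: keeping the index shift consistent throughout, so that the $m+2$ and $m+3$ play exactly the roles that $m+1$ and $m+2$ played in Theorem~\ref{theorem2}. No new analytic ingredient is required beyond Theorem~\ref{theorem1} and the elementary geometric-series manipulation already established in the earlier proof, so the argument can be presented as a short adaptation rather than a fresh derivation.
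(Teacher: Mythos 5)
Your proposal is correct and is exactly what the paper intends: the paper gives no separate proof of Theorem~\ref{theorem3}, stating only that it ``can be proved similarly'' to Theorem~\ref{theorem2}, and your index-shifted adaptation (factoring out $x^{m+1}/(m+2)!$ and majorizing by a geometric series with ratio $x/(m+3)$) is precisely that analogous argument. The only cosmetic remark is that the product inequality is an equality at $j=1$ rather than strict, but the same harmless imprecision appears in the paper's own proof of Theorem~\ref{theorem2} and the final strict bound still holds.
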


\begin{remark}\label{remark1}
The proposed approach offers computational advantages for nonnormal matrices. Consider the well-known test matrix (1.2) from \cite{AlHi09}, defined as
\[
W = \begin{bmatrix} 1 & b \\ 0 & -1 \end{bmatrix},
\]
taking for instance $b = 10$. For this matrix, the baseline Algorithm~\ref{Alg_expAIorig} selects a scaling parameter $s = 5$, whereas it is easy to show that our approach, based on Theorem \ref{theorem2}, yields $s = 0$. This reduction eliminates 5 matrix-matrix multiplications, demonstrating that our method effectively prevents the over-scaling typically induced by large off-diagonal entries. Furthermore, unlike the approach in \cite{AlHi09}, this is achieved without estimating norms of matrix powers. Such estimations incur an $\mathcal{O}(n^2)$ overhead that remains non-negligible for moderate-sized matrices, as noted in \cite{BBS19} for $n\leq 150$ and \cite[p.~213]{SID19} for $n=128$.
\end{remark}


 Finally, regarding the approximation in \eqref{eq:expmWV}, we set the scaling parameter to $s=0$, consistent with the approach described in \cite[Sec. 3.2]{xiao2020generative}. For Taylor based approximations of approximation orders $m \in \{1, 2, 4, 8, 15+\}$, we employ the efficient evaluation formulas detailed in Section \ref{sec:EFEVPOL}. For degrees exceeding these values, we utilize the classical Paterson--Stockmeyer method.
 
\subsection{Proposed algorithms}\label{sec:algorithm}
Taking into account the evaluation formulas from Section \ref{sec:EFEVPOL} and the error analysis from Section \ref{sec:erroranalysis}, Algorithm~\ref{Algorithm2} computes the matrix exponential using the scaling and squaring
algorithm and the Taylor polynomial approximation. Polynomials are efficiently evaluated by means of Paterson--Stockmeyer method or evaluation formulas \eqref{T_1_4}-\eqref{m16s2by2}. Error tolerance $\varepsilon$ is variable, but it will be fixed to $10^{-8}$ in the experiments \cite[Sec. 6.4]{xiao2020generative}.

\begin{algorithm}
\caption{Algorithm for Computing Matrix Exponential}\label{Algorithm2}
\begin{algorithmic}[1]
\Require Weight matrix $W$, error tolerance $\varepsilon$
\Ensure Matrix exponential $e^W$
\State Obtain the polynomial order $m$ and the scaling parameter $s$ by using Algorithms 3 or 4
\State $W = W / (2^s)$
\State Evaluate the matrix polynomial $X = P_m(W)$ by Paterson--Stockmeyer technique or by evaluation formulas \eqref{T_1_4}-\eqref{m16s2by2}
\For{$i = 1$ \textbf{to} $s$}
    \State $X = X \cdot X$
\EndFor
\State \Return $X$
\end{algorithmic}
\end{algorithm}

The algorithm is fundamentally suitable for any error tolerance $\varepsilon$ that is greater than or equal to the unit roundoff error. However, note that when employing IEEE double precision arithmetic, where  $u=2^{-53}\approx 1.11 \times  10^{-16}\ll 10^{-8}$, to achieve a higher level of accuracy that approaches the machine precision the algorithm parameters should be adjusted. Specifically, the accuracy is improved by selecting more terms in the Taylor approximation as shown in \cite[Sec. 6]{SID19}.

Initially, Algorithm~\ref{Algorithm2} determines the optimal parameters $m$ and $s$ by invoking Algorithm \ref{Algorithm3} or Algorithm \ref{Algorithm4}. Next, the matrix $W$ is scaled following the scaling and squaring approximation to reduce its norm. Then, the matrix polynomial is evaluated either using the Paterson--Stockmeyer method or evaluation formulas \eqref{T_1_4}-\eqref{m16s2by2}. Finally, the exponential of the matrix $W$ is recovered carrying out the appropriate $s$ matrix products corresponding to the squaring stage. 

In the implemented functions derived from Algorithms \ref{Algorithm2}, \ref{Algorithm3}, and \ref{Algorithm4}, log$_2$ has been used to avoid the appearance of large numbers when computing high powers of $\alpha_p$ and factorials in expressions like \eqref{eq:sR2}, especially if low precision arithmetic is used. 

As said above, the algorithm for evaluating \eqref{eq:expmWV} is analogous to Algorithm~\ref{Algorithm2}, setting $s=0$ and using similar evaluation formulas to those in Section \ref{sec:EFEVPOL} for Taylor based approximations with $m=1,\,2,\,4,\,8,\,15+$ and the Paterson--Stockmeyer method for Taylor approximations of degrees higher than $15+$. 

Algorithm \ref{Algorithm3} determines the polynomial order $m$ and the scaling parameter $s$ required to compute the matrix exponential of $W$ using a Taylor approximation of order $m \leq 16$ and a desired tolerance $\varepsilon$. It is assumed that the polynomials are evaluated using the Paterson--Stockmeyer method.  
\vspace{6pt}
\begin{myalgorithm}{Algorithm for Obtaining a Polynomial Order $m$ and a Scaling Parameter $s$. The polynomial will be evaluated via Paterson--Stockmeyer}\label{Algorithm3}
{\small
\begin{algorithmic}[1]
\Require Weight matrix $W$, error tolerance $\varepsilon$
\Ensure Polynomial order $m$ and scaling parameter $s$
\State $m = 0, s = 0$

\If {$\|{W}\|_1 = 0$}
    \State \Return $m, s$
\EndIf
\State $M = [1, 2, 4, 6, 9, 12, 16]$
\State $J = \left\lceil \sqrt{M}) \right\rceil$
\State $K = M./J$
\State $C = [1. / [2!, 3!, 3!, 4!, 5!, 6!, 7!, 8!, 10!, 11!, 13!, 14!, 17!, 18!]]$
\State $finish=0$
\State $i=1$
\While {$i \leq \text{length}(M)$ \textbf{and} finish = 0}
    \State $m=M_i$
    \State $j=J_i$
    \State $k=K_i$
    \If {$j > k$} 
        \State $W^j=W^{j-1} \cdot W$
    \EndIf
    \State $p = 2\cdot i - 1$
    \If {$m = 1$}
        \State $E_1 = C_p \cdot \|{W}\|_1^2$
        \State $E_2 = C_{p+1} \cdot \|{W}\|_1^3$
    \Else
        \State $E_1 = C_p \cdot \|{W^j}\|_1^k \cdot \|{W}\|_1 $
        \State $E_2 = C_{p+1} \cdot \|{W^j}\|_1^k \cdot \|{W}^2\|_1 $
    \EndIf
    \If {$E_1 + E_2 \leq \varepsilon$}
        \State $finish = 1$
    \Else
        \State $i = i + 1$
    \EndIf    
\EndWhile
\If {$finish=0$}
    \For {$i = 1$ \textbf{to} $2$}
        \State $s1=\left\lceil{\log_2(E_i/\varepsilon)/(m+i)} \right\rceil$
        \If {$s1>s$}
            \State $s=s1$
        \EndIf                
    \EndFor
\EndIf
\If	{$s>20$}
    \State $s=20$
\EndIf
\State \Return $m, s$
\end{algorithmic}
}
\end{myalgorithm}
\vspace{6pt}

Initially, if $W$ is a null matrix, the values of $m$ and $s$ equal to 0 will be returned. Next, vector $M$ stores the possible degrees of the considered polynomials, vector $J$ is completed as the power of $W$ that must be computed for each value of $M$, and vector $K$ is filled in according to the quotient of the elements of $M$ divided by the elements of $J$.

For each possible value $m$ in $M$, vector $C$ stores two consecutive values corresponding to the coefficients of the Taylor series located at positions $m+1$ and $m+2$, that is, $1/(m+1)!$ and $1/(m+2)!$.

Next, from lines 9 to 31, an attempt is made to determine the most appropriate degree $m$ of the polynomial to use, from the point of view of accuracy and computational efficiency. Since $\frac{W^{m+1}}{(m+1)!}$ and $\frac{W^{m+2}}{(m+2)!}$ are the first two terms of the remainder for the Taylor-based approximation of order $m$ to the exponential of $W$, our objective is to obtain the value of $m$ that guarantees that:
\begin{equation}
\frac{\|W^{m+1}\|_1}{(m+1)!}+\frac{\|W^{m+2}\|_1}{(m+2)!}\leq \varepsilon,
\label{eq:error_considered}
\end{equation}
where $E_1$ corresponds to the first summand and $E_2$ to the second one in the algorithm. 

Notwithstanding, since $W^{m+1}$ and $W^{m+2}$ are not explicitly calculated, the 
1-norm of these matrix powers is bounded using the 1-norm of the highest 
power of $W$ computed thus far. For example, when $m=1$ (lines 19--21), just $W$ is 
known, but $\|W^2\|_1$ and $\|W^3\|_1$ are required; these are bounded by 
$\|W\|_1^2$ and $\|W\|_1^3$, respectively. Similarly, if, for example, $m=12$ (lines 22--24), 
$W^4$ will have been previously calculated, and the required norms $\|W^{13}\|_1$ and 
$\|W^{14}\|_1$ are bounded by $\|W^4\|_1^3 \cdot \|W\|_1$ and 
$\|W^4\|_1^3 \cdot \|W^2\|_1$. Thus, if $m$ equals 12, $j$ and $k$ take the 
values 4 and 3, accordingly. In detail, Table \ref{table:mjk_algorithm3} contains the values of $j$ and $k$ used for any value of $m$ taken into account.

If expression (\ref{eq:error_considered}) is not satisfied for any considered value of $m$, then $m$ is set equal to the maximum degree in vector $M$, i.e., $m=16$. Consequently, the minimum value of $s$ is obtained such that

\begin{equation}
\max\left(\frac{1}{(m+1)!} \cdot \left\|\left(\frac{W}{2^s}\right)^{m+1}\right\|_1,  \frac{1}{(m+2)!} \cdot \left\|\left(\frac{W}{2^s}\right)^{m+2}\right\|_1\right) \leq \varepsilon.
\label{eq:obtain_s}
\end{equation}

By solving for $s$ in the first and second terms of (\ref{eq:obtain_s}) independently and taking the logarithm of each, we obtain

\begin{equation}
s\geq \max\left(\frac{1}{m+1}\cdot \log_2\left(\frac{\|W^{m+1}\|_1}{\varepsilon \cdot (m+1)!} \right), 
\frac{1}{m+2}\cdot \log_2\left(\frac{\|W^{m+2}\|_1}{\varepsilon \cdot (m+2)!} \right) \right),
\end{equation} in accordance with lines 32 to 39 in the algorithm. Finally, to avoid overscaling, the maximum value of $s$ is set to 20 in lines 40 and 41.

\begin{table}[H]
\begin{center}
\begin{tabular}{|c|| c|| c| c|| c| c||}
\hline
$M$ & Computed $W$ Power & \multicolumn{2}{|c||}{Required Norms} & \multicolumn{2}{|c||}{Estimated Norms} \\\hline\hline
1   &  $W$ & $\|W^2\|_1$ & $\|W^3\|_1$ &  $\|W\|^2_1$ & $\|W\|^3_1$ \\
\hline\hline
2   &  $W^2$ & $\|W^3\|_1$ & $ \|W^4\|_1$ &  $\|W^2\|^1_1 \cdot \|W\|_1$ & $ \|W^2\|^1_1 \cdot \|W^2\|_1$ \\
\hline
4   &  $W^2$ & $\|W^5\|_1$ & $ \|W^6\|_1$ &  $\|W^2\|^2_1 \cdot \|W\|_1$ & $ \|W^2\|^2_1 \cdot \|W^2\|_1$ \\
\hline
6   &  $W^3$ & $\|W^7\|_1$ & $ \|W^8\|_1$ &  $\|W^3\|^2_1 \cdot \|W\|_1$ & $ \|W^3\|^2_1 \cdot \|W^2\|_1$ \\
\hline
9   &  $W^3$ & $\|W^{10}\|_1$ & $ \|W^{11}\|_1$ &  $\|W^3\|^3_1 \cdot \|W\|_1$ & $ \|W^3\|^3_1 \cdot \|W^2\|_1$ \\
\hline
12   &  $W^4$ & $\|W^{13}\|_1$ & $ \|W^{14}\|_1$ &  $\|W^4\|^3_1 \cdot \|W\|_1$ & $ \|W^4\|^3_1 \cdot \|W^2\|_1$ \\
\hline
16   &  $W^4$ & $\|W^{17}\|_1$ & $ \|W^{18}\|_1$ &  $\|W^4\|^4_1 \cdot \|W\|_1$ & $ \|W^4\|^4_1 \cdot \|W^2\|_1$ \\
\hline
$m$   &  $W^j$ & $\|W^{m+1}\|_1$ & $ \|W^{m+2}\|_1$ &  $\|W^j\|^k_1 \cdot \|W\|_1$ & $ \|W^j\|^k_1 \cdot \|W^2\|_1$ \\
\hline
\end{tabular}
\end{center}
\caption{Computed powers of $W$, the corresponding norms required by \eqref{eq:error_considered}, and their resulting approximations for each polynomial degree $M$ considered in Algorithm \ref{Algorithm3}.}\label{table:mjk_algorithm3}
\end{table}

Similarly, Algorithm \ref{Algorithm4} fulfills the same role as Algorithm~\ref{Algorithm3}, but incorporates formulas \eqref{T_1_4}-\eqref{m16s2by2} for evaluating the matrix polynomials. While the structure remains consistent with the previous algorithm, the polynomial degrees differ. Furthermore, the maximum explicitly computed power of $W$ is restricted to $W^2$. A key distinction lies in the penultimate value of vector $C$, which corresponds to the Taylor-based approximation $T_{15+}$ defined in \eqref{m16s2by2}. As previously noted, this approximation is equivalent to \eqref{y22b16}, where $b_{16}$ is given in IEEE double-precision arithmetic by \eqref{b16value}. Consequently, the penultimate element of vector $C$ is derived from the corresponding remainder term \eqref{eq:RmW15+}. Table \ref{table:mjk_algorithm4} incorporates the values of $j$ and $k$ applied for any value of $m$. For $m = 15$, the indices $j$ and $k$ are set to 2 and 8, respectively. Thus, using $W^2$, the bounds for $\|W^{16}\|_1$ and $\|W^{17}\|_1$ (lines 22, 23, and 28) are bounded by $\|W^2\|_1^8$ and $\|W^2\|_1^8 \cdot \|W\|_1$. The calculation of the scaling parameter $s$ follows the procedure described in Algorithm~\ref{Algorithm3}. However, determining $s$ based on the maximum value from the first two terms of the remainder \eqref{eq:RmW15+} ensures in this case a more conservative selection.

\vspace{6pt}
\begin{myalgorithm}{Algorithm for Obtaining a Polynomial Order $m$ and a Scaling Parameter $s$.
The polynomial will be evaluated via formulas \eqref{T_1_4}--\eqref{m16s2by2}}
\label{Algorithm4}
{\small
\begin{algorithmic}[1]
\Require Weight matrix $W$, error tolerance $\varepsilon$
\Ensure Polynomial order $m$ and scaling parameter $s$
\State $m = 0, s = 0$
\If {$\|{W}\|_1 = 0$}
    \State \Return $m, s$
\EndIf
\State $M = [1, 2, 4, 8, 15]$
\State $J = [1, 2, 2, 2, 2]$
\State $K = \left\lceil M./J\right\rceil$
\State $C=[1/2!, 1/3!, 1/3!, 1/4!, 1/5!, 1/6!, 1/9!, 1/10!, |1/16! - b_{16}|, 1/17!]$
\State $finish=0$
\State $i=1$
\While {$i \leq \text{length}(M)$ \textbf{and} finish = 0}
    \State $m=M_i$
    \State $j=J_i$
    \State $k=K_i$
    \If {$m = 2$} 
        $W^2=W^{j-1} \cdot W$
    \EndIf
    \State $p = 2\cdot i - 1$
    \If {$m = 1$}
        \State $E_1 = C_p \cdot \|{W}\|_1^2$
        \State $E_2 = C_{p+1} \cdot \|{W}\|_1^3$
    \ElsIf {$j\cdot  k = m$} \% m=2, 4, 8
        \State $E_1 = C_p \cdot \|{W^j}\|_1^k \cdot \|{W}\|_1$
        \State $E_2 = C_{p+1} \cdot \|{W^j}\|_1^k \cdot \|{W}^2\|_1$
    \Else \ \% m=15
        \State $E_1 = C_p \cdot \|{W^j}\|_1^k $
        \State $E_2 = C_{p+1} \cdot \|{W^j}\|_1^k \cdot \|{W}\|_1$
    \EndIf
    \If {$E_1 + E_2 \leq \varepsilon$}
        \State $finish = 1$
    \Else
        \State $i = i + 1$
    \EndIf
\EndWhile
\If {$finish=0$}
    \For {$i = 1$ \textbf{to} $2$}
        \State $s1=\left\lceil{\log_2(E_i/\varepsilon)/(m+i)} \right\rceil$
        \If {$s1>s$}
            \State $s=s1$
        \EndIf
    \EndFor
\EndIf
\If	{$s>20$}
    \State $s=20$
\EndIf
\State \Return $m, s$
\end{algorithmic}
}
\end{myalgorithm}

\begin{table}[H]
\begin{center}
\begin{tabular}{|c|| c|| c| c|| c| c||}
\hline
$M$ & Computed $W$ Power & \multicolumn{2}{|c||}{Required Norms} & \multicolumn{2}{|c||}{Estimated Norms} \\
\hline\hline
1   &  $W$ & $\|W^2\|_1$ & $\|W^3\|_1$ &  $\|W\|^2_1$ &  $\|W\|^3_1$ \\
\hline\hline
2   &  $W^2$ & $\|W^3\|_1$ & $\|W^4\|_1$ &  $\|W^2\|^1_1 \cdot \|W\|_1$ & $\|W^2\|^1_1 \cdot \|W^2\|_1$ \\
\hline
4   &  $W^2$ & $\|W^5\|_1$ & $\|W^6\|_1$ &  $\|W^2\|^2_1 \cdot \|W\|_1$ & $\|W^2\|^2_1 \cdot \|W^2\|_1$ \\
\hline
8   &  $W^2$ & $\|W^{9}\|_1$ & $\|W^{10}\|_1$ &  $\|W^2\|^4_1 \cdot \|W\|_1$ & $\|W^2\|^4_1 \cdot \|W^2\|_1$ \\
\hline
even $m$   &  $W^j$ & $\|W^{m+1}\|_1$ & $\|W^{m+2}\|_1$ &  $\|W^j\|^k_1 \cdot \|W\|_1$ & $\|W^j\|^k_1 \cdot \|W^2\|_1$ \\
\hline\hline
15   &  $W^2$ & $\|W^{16}\|_1$ & $\|W^{17}\|_1$ &  $\|W^2\|^8_1$ & $\|W^2\|^8_1 \cdot \|W\|_1$ \\
\hline
odd $m$   &  $W^j$ & $\|W^{m+1}\|_1$ & $\|W^{m+2}\|_1$ &  $\|W^j\|^k_1$ & $\|W^j\|^k_1 \cdot \|W\|_1$ \\
\hline
\end{tabular}
\end{center}
\caption{Computed powers of $W$, the norms required by \eqref{eq:error_considered}, and their corresponding approximations for each polynomial degree $M$ considered in Algorithm \ref{Algorithm4}. Note that $j=2$ for all degrees $m > 1$.}\label{table:mjk_algorithm4}
\end{table}

\section{Numerical results}\label{sec:numericalresults}

This section compares the numerical and computational properties of the different implementations. Following the baseline method in \cite{xiao2020generative}, we set the error tolerance to $\varepsilon = 10^{-8}$. The code is available at \url{https://github.com/hipersc/MatrixFunctions/}.

In all experiments, the normwise relative error serves as our primary metric to quantify the accuracy of each matrix exponential method relative to the ``exact'' solution (computed to double-precision limits, as detailed below). This error is defined as:

\begin{equation}\label{eq:normwise_relative_error}
\textmd{Er}(A)=\frac{\|\exp(A)-\widetilde{\exp}(A)\|_2}{\|\exp(A)\|_2}, 
\end{equation}
where $\exp(A)$ and $\widetilde{\exp}(A)$ stands for the approximated and the ``exact'' solution.

In this section, all computational tests were conducted on a PC equipped with an Intel Core i9-14900HX processor, 32 GB of RAM, and an NVIDIA GeForce RTX 4070 graphics card. Comparisons were performed initially in MATLAB (version 2025a) and subsequently in PyTorch.

\subsection{Numerical results in MATLAB}\label{sec:numericalresultsmatlab}

The numerical experiments are evaluated against two primary benchmarks: the Taylor-based method described in \cite{xiao2020generative} (Algorithm \ref{Alg_expAIorig}) and a custom implementation of the Paterson--Stockmeyer (P--S) scheme. This selection is predicated on the current state of deep learning frameworks such as PyTorch, where native matrix exponential routines (e.g., \texttt{torch.linalg.expm}) are typically optimized for standard IEEE precision formats and do not support user-defined error tolerances. To evaluate the performance of our proposed methods in a realistic setting, we adopt the framework from \cite{xiao2020generative} as a baseline for matrix exponential applications in generative flow models. Furthermore, the inclusion of the P--S scheme as a secondary benchmark allows for a direct quantification of the efficiency gains attributable specifically to the optimized polynomial evaluation formulas integrated into the proposed algorithm.

To facilitate the comparison, the following three implementations were developed in MATLAB:
\begin{enumerate}
\item \texttt{expm\_flow\_ps}: An implementation based on the proposed Algorithms ~\ref{Algorithm2}, where matrix polynomials are evaluated using the Paterson--Stockmeyer technique, and \ref{Algorithm3}.
\item \texttt{expm\_flow\_opt15}: A version identical to the previous implementation, but utilizing evaluation formulas \eqref{T_1_4}-\eqref{m16s2by2} for matrix polynomials. Algorithm~\ref{Algorithm4} is now considered.
\item \texttt{expm\_flow}: The original algorithm proposed in \cite{xiao2020generative}, rewritten in MATLAB to ensure a fair comparison within the same computational environment.
\end{enumerate}

To evaluate the performance of these three codes, a testbed composed of 360 ill-conditioned matrices belonging to the Matrix Computation Toolbox (MCT)~\cite{higham2002test} the Eigtool MATLAB Package (EMP)~\cite{wright2009eigtool} was generated. Their size, always a power of 2, ranged from 4 to 1024.
The ``exact'' exponential function of each matrix $A$ was calculated, where applicable, following the three steps outlined in the following procedure:
\begin{itemize}
\item By applying the MATLAB function \texttt{eig} to the input matrix $A$, the matrices $D$ and $V$ were obtained such that $A = V \cdot D \cdot V^{-1}$. The matrix $E_1 = V \cdot \exp(D) \cdot V^{-1}$ was then generated.
\item The matrix $E_2 = \exp(A)$ was derived using the MATLAB \texttt{expm} function, along with MATLAB Symbolic Math Toolbox and the function \texttt{vpa} (variable-precision floating-point arithmetic) with 256 decimal digits.
\item Matrix $E_1$ was considered the ``exact''  exponential of $A$ if the following condition holds:
\[\frac{{{{\left\| {{E_1} - {E_2}} \right\|}_2}}}{{{{\left\| {{E_1}} \right\|}_2}}} \le u, \]
where $u=2^{-53}$ stands for the unit roundoff in IEEE double precision arithmetic. Otherwise, matrix $A$ was excluded.
\end{itemize}

Graphically, Figure \ref{fig_set1} shows the results obtained from the first of the experiments, carried out on the matrices comprising MCT and EMP. Figure \ref{fig_set1_a} displays the normwise relative error committed by each of the codes for the different matrices analyzed. These same errors, ordered from highest to lowest for code \texttt{expm\_flow\_opt15}, are also shown in Figure \ref{fig_set1_b}. The black line appearing in \ref{fig_set1_a} corresponds to the expected relative error when calculating the exponential of the matrices. This value is equivalent to the product of the condition number of the exponential function for each matrix multiplied by the error tolerance used ($10^{-8}$) in our experiments. Ideally, it is to be expected that all results will be below this line, with those ones further down clearly being better. It is easy to see how the results for \texttt{expm\_flow} are less accurate than those for \texttt{expm\_flow\_opt15} and \texttt{expm\_flow\_ps}, as they occupy the highest positions in the graph.      

Figure \ref{fig_set1_c} corresponds to the performance profile. It represents, for each value $\alpha$ on the $x$-axis, the percentage of matrices, expressed as a proportion of one, for which the error incurred in calculating their exponential function is $\alpha$ times less than or equal to the minimum error committed by each of the methods compared. The higher the values provided, the better the code. As we can appreciate, \texttt{expm\_flow\_opt15} and \texttt{expm\_flow\_ps} occupy the top of the picture, while \texttt{expm\_flow} provides the poorest results. While \texttt{expm\_flow\_ps} offers better results for $\alpha$ equal to 1, this improvement is reversed in favour of \texttt{expm\_flow\_opt15} from $\alpha$ values close to 4.

The direct correspondence of these values for $\alpha$ equal to 1 are those ones collected, in the form of a pie chart, on the left side of Figure \ref{fig_set1_d}. It represents the percentage of cases in which each code yielded the most accurate result. Thus, we can state that \texttt{expm\_flow\_ps} provided the closest values to the exact solution in 58\% of cases, \texttt{expm\_flow\_opt15} in 42\% of matrices, and \texttt{expm\_flow} in a negligible percentage close to 0. In contrast, the right side of the picture reports the percentage of matrices for which each method gave the most inaccurate result. As we can see, \texttt{expm\_flow} accounted for 99\% of the matrices, clearly being the code with the worst performance.      

As whisker plots, Figures \ref{fig_set1_e} and Figure \ref{fig_set1_f} display respectively the order of the polynomial applied and the value of the parameter $s$ used in the scaling and squaring technique. In each box, the red centre line indicates the median, and the blue bottom and top lines indicate the 25th and 75th percentiles, correspondingly. The whiskers extend to the most extreme data points that are not considered outliers, and outliers are represented individually by the ``+'' marker symbol. The code \texttt{expm\_flow\_ps} usually worked with polynomials of degree 16, or 15 in the case of \texttt{expm\_flow\_opt15}, while \texttt{expm\_flow} used polynomials of lower order, in most cases between 5 and 7. Regarding the scaling parameter $s$, the median value was 1 for the code $\texttt{expm\_flow\_ps}$, 2 for $\texttt{expm\_flow\_opt15}$, and 5 for $\texttt{expm\_flow}$. The maximum values reached were equal to 20 for \texttt{expm\_flow\_ps} and \texttt{expm\_flow\_opt15} and 718 for \texttt{expm\_flow}.

Finally, Figures \ref{fig_set1_g} and \ref{fig_set1_h} present, in the form of a bar chart, two key comparative results: the number of matrix products required by each code and the response times (expressed in seconds) employed in the computation of all test matrices. Specifically, $\texttt{expm\_flow\_ps}$ required 3110 products, which is approximately $1.20$ times the 2597 products needed by the baseline $\texttt{expm\_flow\_opt15}$. The least efficient code, $\texttt{expm\_flow}$, required 5397 matrix products, representing a factor of approximately $2.08$ compared to $\texttt{expm\_flow\_opt15}$. This comparison highlights $\texttt{expm\_flow\_opt15}$ as the most product-efficient implementation, followed by $\texttt{expm\_flow\_ps}$, with $\texttt{expm\_flow}$ consistently requiring over twice the computational effort in terms of matrix multiplications.

The analysis of execution times confirms the relative performance difference, using $\texttt{expm\_flow\_opt15}$ as the baseline for comparison. Specifically, $\texttt{expm\_flow\_ps}$ required $1.56$ seconds for computation, making it approximately $1.13$ times slower than the $1.39$ seconds required by $\texttt{expm\_flow\_opt15}$. The least efficient code, $\texttt{expm\_flow}$, required $2.31$ seconds, resulting in a time factor of approximately $1.67$ times slower compared to $\texttt{expm\_flow\_opt15}$. This comparison confirms $\texttt{expm\_flow\_opt15}$ is the fastest implementation, with $\texttt{expm\_flow}$ exhibiting the largest runtime overhead. Notably, these execution time ratios are lower than the corresponding matrix product ratios. This reduced proportionality occurs because the test matrices, which range in size from $4 \times 4$ to $1024 \times 1024$, are not all large enough for the number of computed matrix products to fully dominate the total execution cost.

\begin{figure}[H]
	\begin{subfigure}[b]{0.48\textwidth}
		\centerline{\includegraphics[scale=0.35]{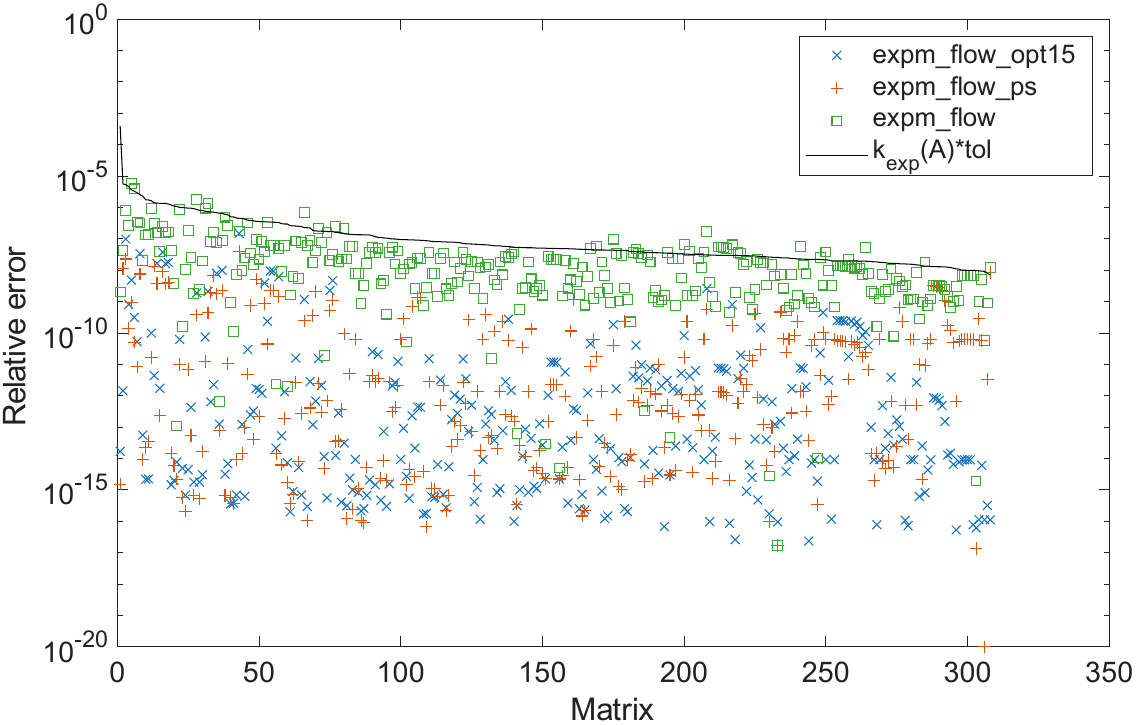}}
		\caption{\footnotesize Normwise relative error.}
		\label{fig_set1_a} 
	\end{subfigure} 
	\begin{subfigure}[b]{0.55\textwidth}
		\centerline{\includegraphics[scale=0.35]{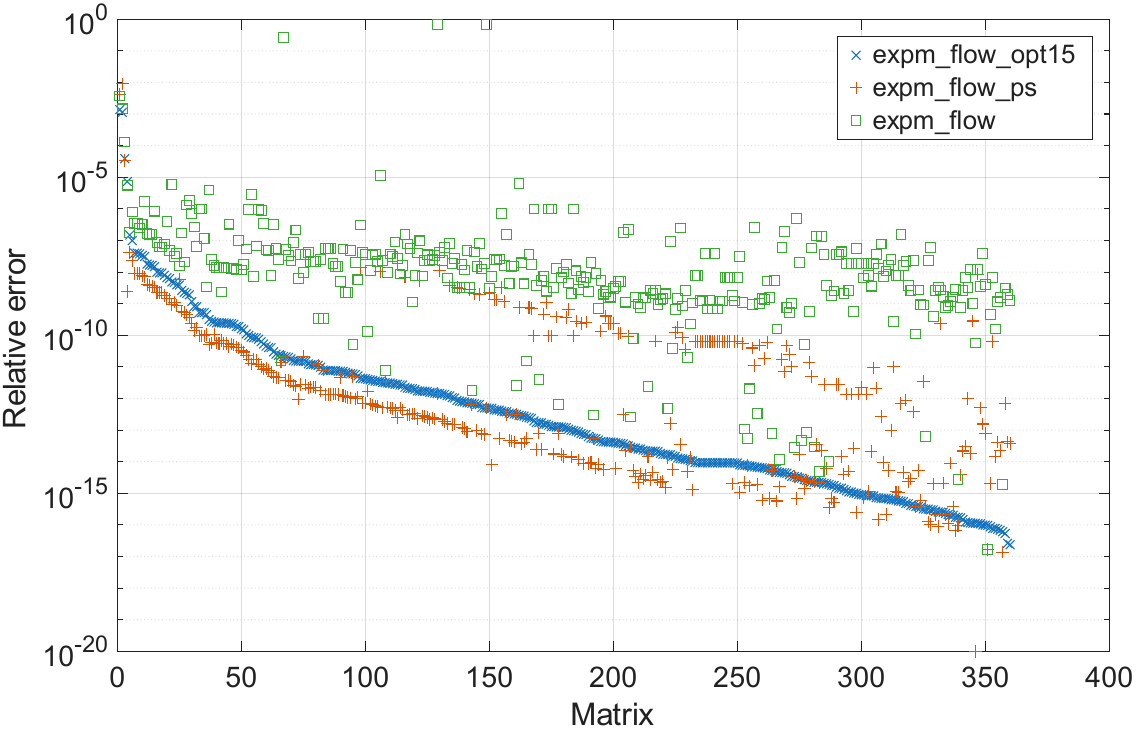}}
        \caption{\footnotesize Ordered normwise relative error.}
      	\label{fig_set1_b}
	\end{subfigure} \\
	\begin{subfigure}[b]{0.48\textwidth}
		\centerline{\includegraphics[scale=0.35]{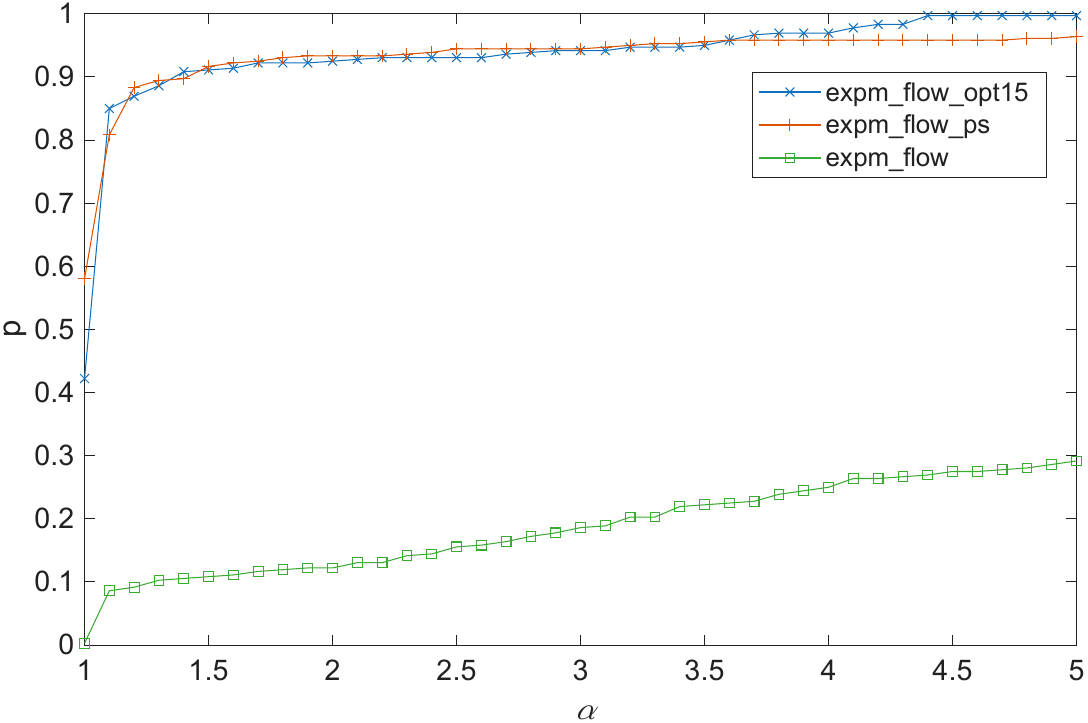}}
        \caption{\footnotesize Performance profile.}        		
        \label{fig_set1_c}
	\end{subfigure}
	\begin{subfigure}[b]{0.58\textwidth}
		\centerline{\fbox{\raisebox{0.65cm}{\includegraphics[scale=0.34]{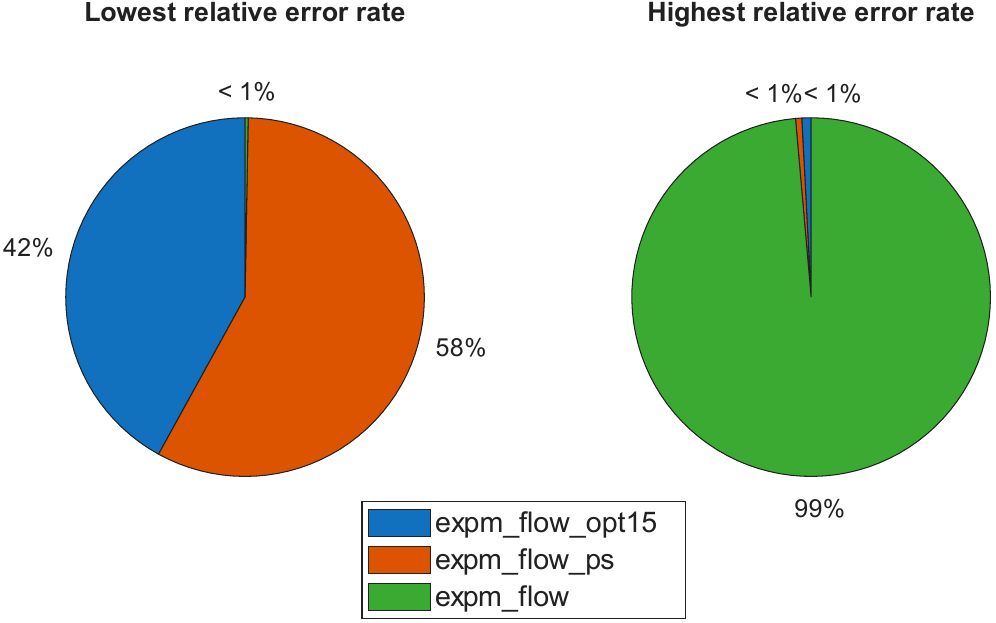}}}}
		\caption{\footnotesize Lowest and highest relative error rate.}
		\label{fig_set1_d}
	\end{subfigure} \\
	\begin{subfigure}[b]{0.48\textwidth}
		\centerline{\includegraphics[scale=0.35]{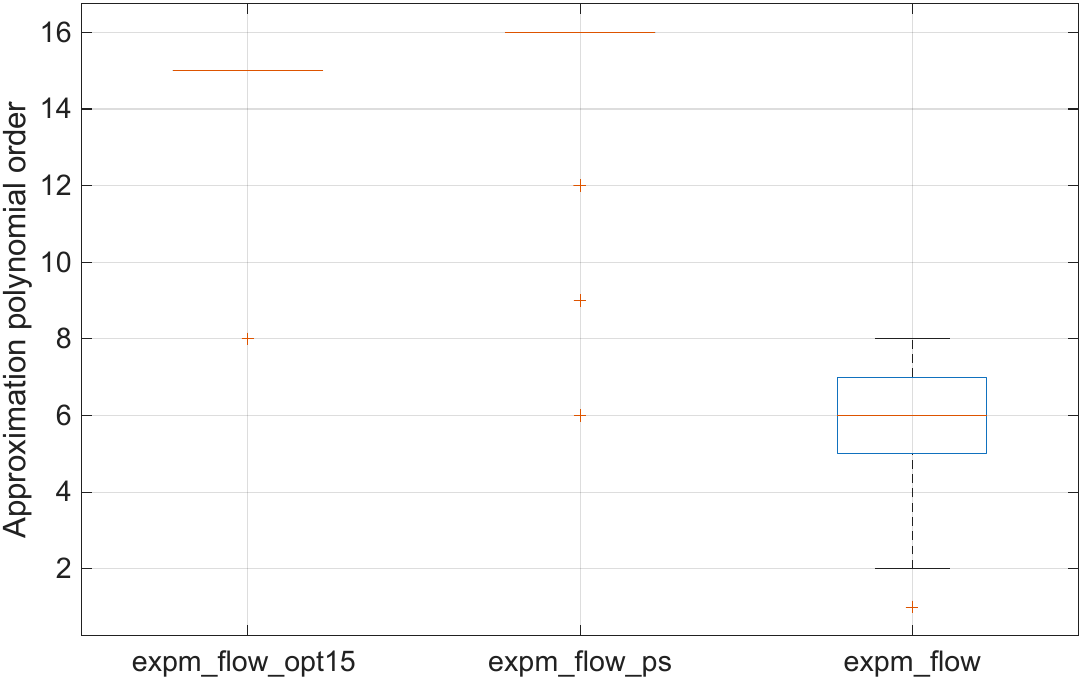}}
		\caption{\footnotesize Polynomial order.}
		\label{fig_set1_e}
	\end{subfigure}
	\begin{subfigure}[b]{0.48\textwidth}
		\centerline{\includegraphics[scale=0.35]{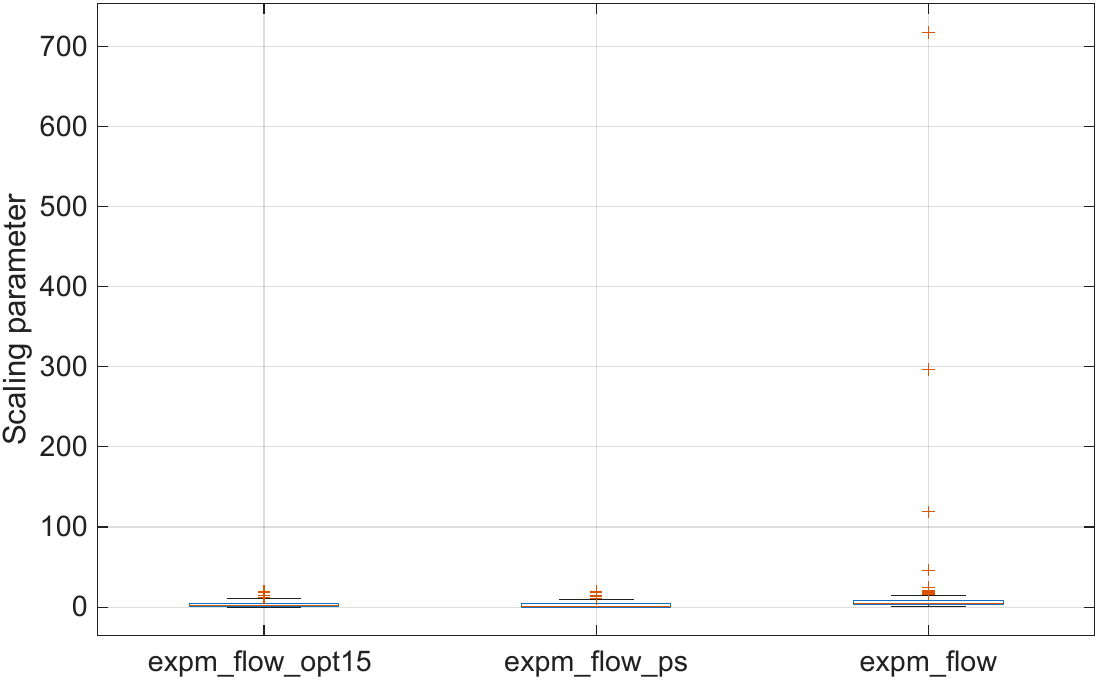}}
		\caption{\footnotesize Scaling parameter.}
		\label{fig_set1_f}
	\end{subfigure}
	\begin{subfigure}[b]{0.48\textwidth}
		\centerline{\includegraphics[scale=0.35]{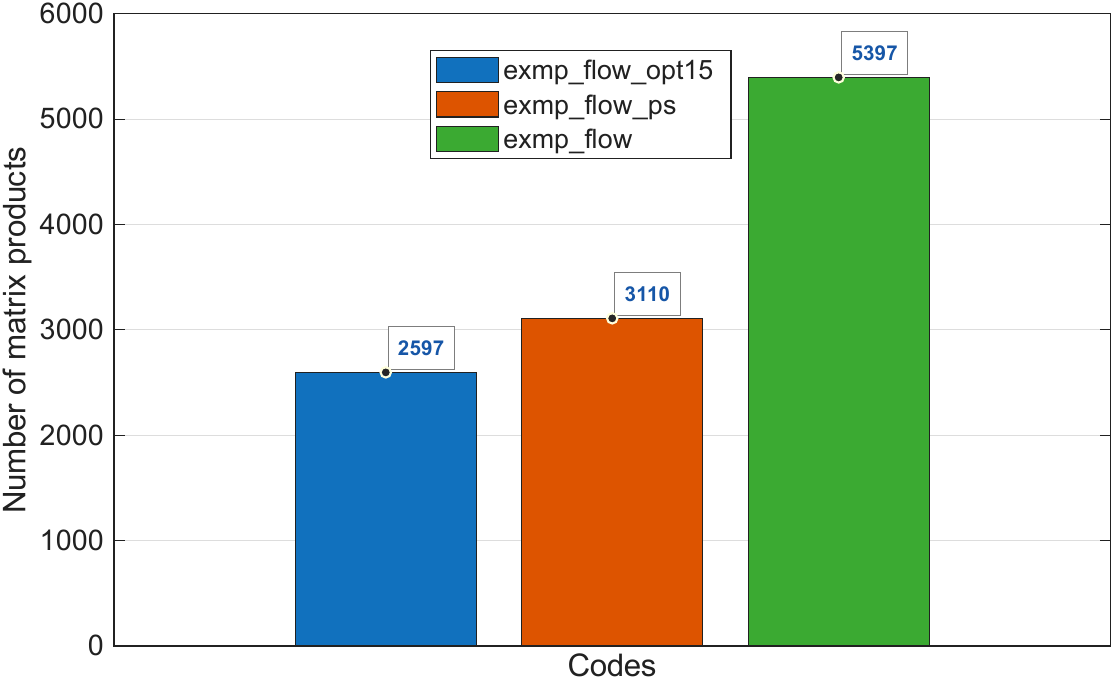}}
		\caption{\footnotesize Number of matrix products.}
		\label{fig_set1_g}
	\end{subfigure}
	\begin{subfigure}[b]{0.55\textwidth}
		\centerline{\includegraphics[scale=0.35]{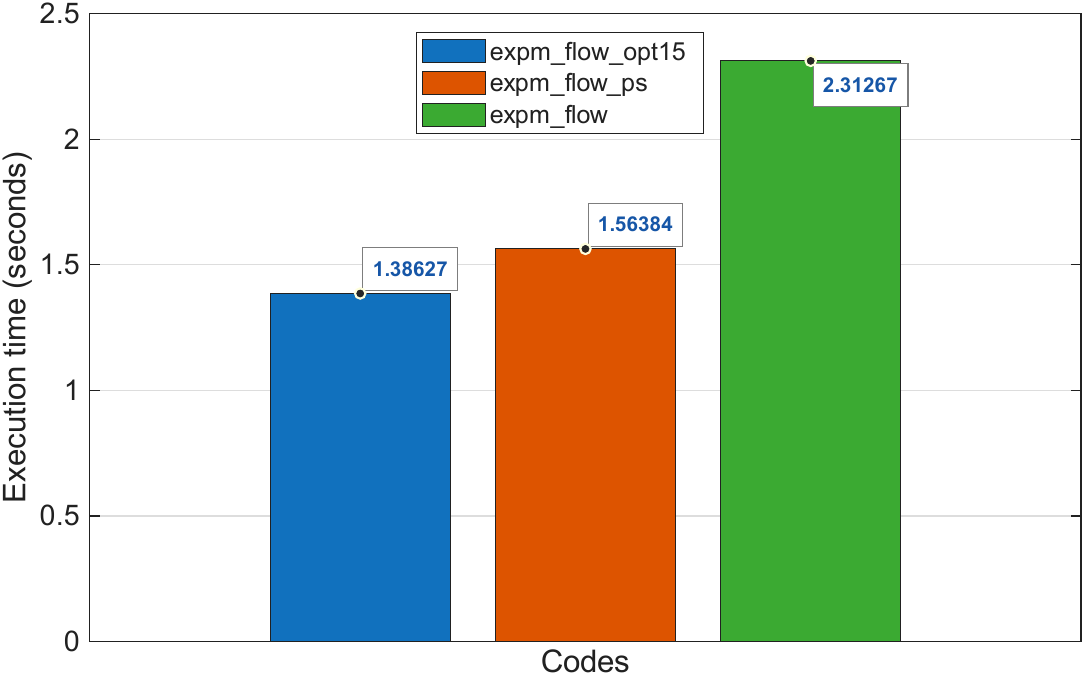}}
		\caption{\footnotesize Execution time.}
		\label{fig_set1_h}
	\end{subfigure}
	\caption{Experimental results for matrices in MCT and EMP sets.}
	\label{fig_set1}
\end{figure}

\subsection{Performance evaluation in the PyTorch generative flow environment}\label{sec:numericalresultspytorch}
In this subsection, the results of tests performed directly on the code available in \cite{xiao2020generative} will be described. As mention before, this code incorporates a matrix exponential function into neural networks implementing generative flow models. In our executions, the matrix exponential is computed using the following functions implemented in PyTorch:
\begin{enumerate}
\item $\texttt{expm\_flow\_ps}$ is based on Algorithms \ref{Algorithm2} and \ref{Algorithm3}. The polynomial is evaluated using the Paterson--Stockmeyer method.
\item $\texttt{expm\_flow\_opt15}$: Analogous to the previous function, this implementation is based on Algorithms \ref{Algorithm2} and \ref{Algorithm4}, where the polynomials are evaluated using formulas \eqref{T_1_4}-\eqref{m16s2by2}. 
\item $\texttt{expm\_flow}$ directly corresponds to the matrix exponential approximation method presented in \cite{xiao2020generative}.
\end{enumerate}

To maintain consistency with the baseline implementation \texttt{expm\_flow} \cite{xiao2020generative}, both \texttt{expm\_flow\_ps} and \texttt{expm\_flow\_opt15} utilize the $\infty$-norm.

All computations were performed using NVIDIA CUDA on a dedicated GPU. The proposed method was evaluated during the training process using the CIFAR-10 \cite{krizhevsky2009learning}, ImageNet32, and ImageNet64 \cite{pmlr-v48-oord16} datasets. To obtain representative performance metrics, we analyzed 5000 invocations of the matrix exponential function. For each invocation, we recorded metadata for the matrices comprising the input tensor, specifically the number of matrices in the batch, their dimensions, and the maximum $\infty$-norm across the set.

The $\infty$-norms of the matrices studied range from $2.84 \times 10^{-4}$ to $12.57$ for CIFAR-10, from $1.17 \times 10^{-5}$ to $12.49$ for ImageNet32, and from $1.27 \times 10^{-5}$ to $12.8$ for ImageNet64. For a normal matrix with the minimum observed norm of $1.27 \times 10^{-5}$, the baseline Algorithm~\ref{Alg_expAIorig} requires a Taylor polynomial of degree $m=7$ with a cost of $6M$ matrix products according to \eqref{C_orig}. In contrast, Algorithm~\ref{Algorithm4} utilizes a degree $m=8$ with a cost of only $3M$. Since the scaling parameter is $s=0$ for both, the baseline method theoretically requires twice as many matrix products as the proposed approach.

Similarly, for a normal matrix with the maximum observed norm of $12.8$, the baseline Algorithm~\ref{Alg_expAIorig} selects $m=8$ and $s=5$, leading to a total cost of $12M$ matrix products. In contrast, Algorithm~\ref{Algorithm4} utilizes $m=15$ with $s=3$, which reduces the cost to $7M$. This implies that the baseline method is theoretically $1.71$ times more expensive than the proposed approach for this case.

In addition, the following metrics were logged for each of the three comparison functions: the degree of the polynomial used, the scaling parameter, the maximum error observed across all matrices in the tensor, the total number of matrix products involved, and the required response time.

When estimating the normwise relative error committed in the computation of each matrix in (\ref{eq:normwise_relative_error}), it was assumed that \texttt{linalg.matrix\_exp}, the PyTorch implementation of the exponential function, provides an exact solution.

Figures \ref{fig_set2}, \ref{fig_set3}, and \ref{fig_set4} graphically illustrate the results obtained for CIFAR-10, ImageNet32 and ImageNet64 datasets, respectively. Figures \ref{fig_set2_a}, \ref{fig_set3_a}, and \ref{fig_set4_a} plot the normwise relative errors just for the first 150 training runs. For clarity, these same results are shown sorted, but now for the 5000 runs, from largest to smallest error independently for the three functions analyzed in Figures \ref{fig_set2_b}, \ref{fig_set3_b}, and \ref{fig_set4_b}. Unlike \ref{fig_set1_b}, there is here no correspondence between a matrix on the $x$-axis and the three errors that appear on the $y$-axis, due to the aforementioned sorting. As we can observe, \texttt{expm\_flow} always resulted in higher errors in the exponential approximation. In contrast, \texttt{expm\_flow\_opt15} and \texttt{expm\_flow\_ps} provided very similar results, with \texttt{expm\_flow\_opt15} being slightly more accurate for the three datasets. This conclusion can also be deduced from the Figures \ref{fig_set2_c}, \ref{fig_set3_c} and \ref{fig_set4_c} related to the performance profile, where \texttt{expm\_flow\_opt15} and \texttt{expm\_flow\_ps} quickly provided probabilities equal to 1. 

Figures\ref{fig_set2_d}, \ref{fig_set3_d}, and \ref{fig_set4_d} also confirm that code \texttt{expm\_flow\_opt15} delivered a more accurate result than the other functions in a higher percentage of cases, with values of 50\%, 49\%, and 50\% of the matrices for the three datasets, respectively. In turn, function \texttt{expm\_flow\_opt15} provided similar but slightly lower percentages, with values of 48\%, for CIFAR-10, and 45\%, for ImageNet32 and ImageNet64.

In clear accordance with Figures \ref{fig_set1_e}, Figures \ref{fig_set2_e}, \ref{fig_set3_e}, and \ref{fig_set4_e} also reveal that \texttt{expm\_flow} used polynomials of a lower order than those of the other functions. In contrast, \texttt{expm\_flow} required higher values of the scaling parameter, as can be appreciated in Figures \ref{fig_set2_f}, \ref{fig_set3_f}, and \ref{fig_set4_f}. 

Finally, with regard to the number of products, Figures \ref{fig_set2_g}, \ref{fig_set3_g}, and \ref{fig_set4_g} state that \texttt{expm\_flow} carried out 1.99, 1.86, and 1.88 times more products than \texttt{expm\_flow\_opt15} in the calculation of matrix exponentials. Similarly, Figures \ref{fig_set2_h}, \ref{fig_set3_h}, and \ref{fig_set4_h} show that the exponential computation based on function \texttt{expm\_flow\_opt15} was 1.87, 1.97, and 2.5 times faster than the same calculation using function \texttt{expm\_flow}. 

\begin{figure}[H]
	\begin{subfigure}[b]{0.48\textwidth}
		\centerline{\includegraphics[scale=0.35]{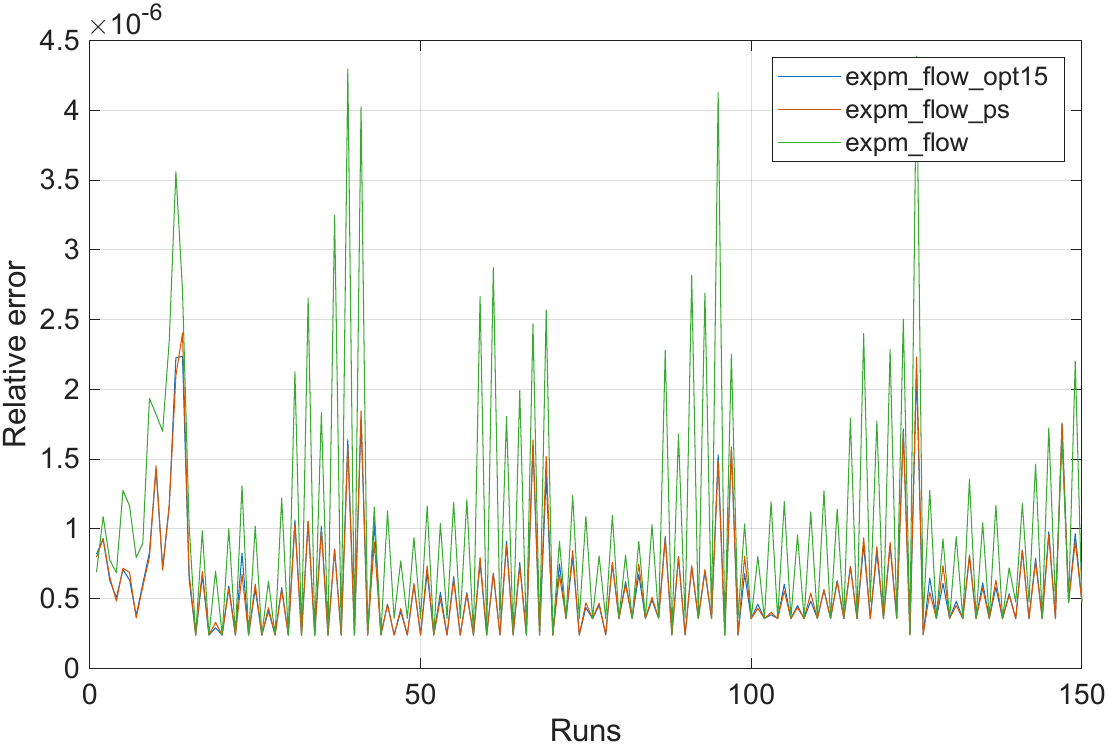}}
		\caption{\footnotesize Normwise relative error.}
		\label{fig_set2_a} 
	\end{subfigure} 
	\begin{subfigure}[b]{0.55\textwidth}
		\centerline{\includegraphics[scale=0.35]{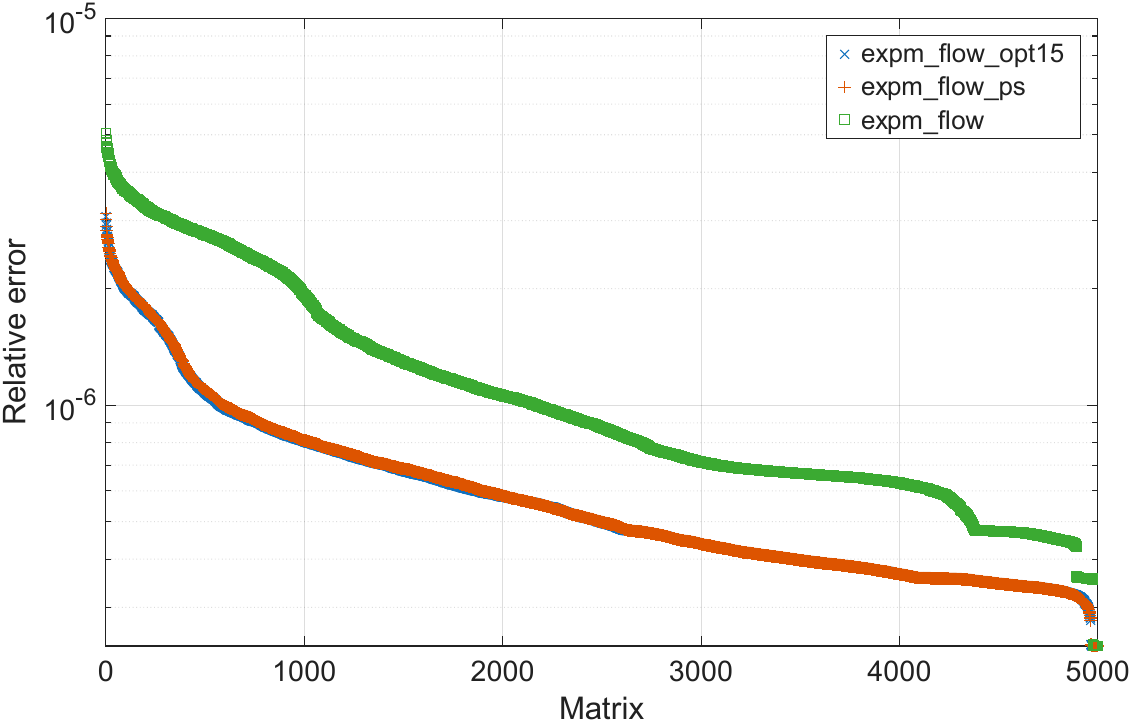}}
        \caption{\footnotesize Unmatched ordered normwise relative error.}
      	\label{fig_set2_b}
	\end{subfigure} \\
	\begin{subfigure}[b]{0.48\textwidth}
		\centerline{\includegraphics[scale=0.35]{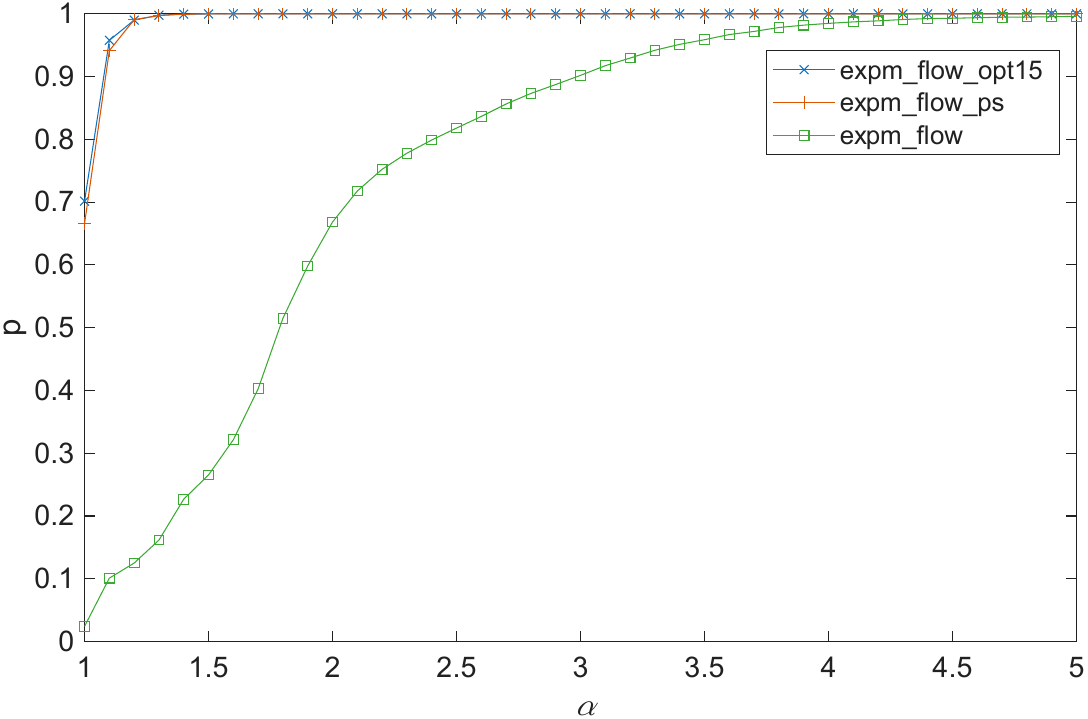}}
        \caption{\footnotesize Performance profile.}        		
        \label{fig_set2_c}
	\end{subfigure}
	\begin{subfigure}[b]{0.58\textwidth}
		\centerline{\fbox{\raisebox{0.65cm}{\includegraphics[scale=0.34]{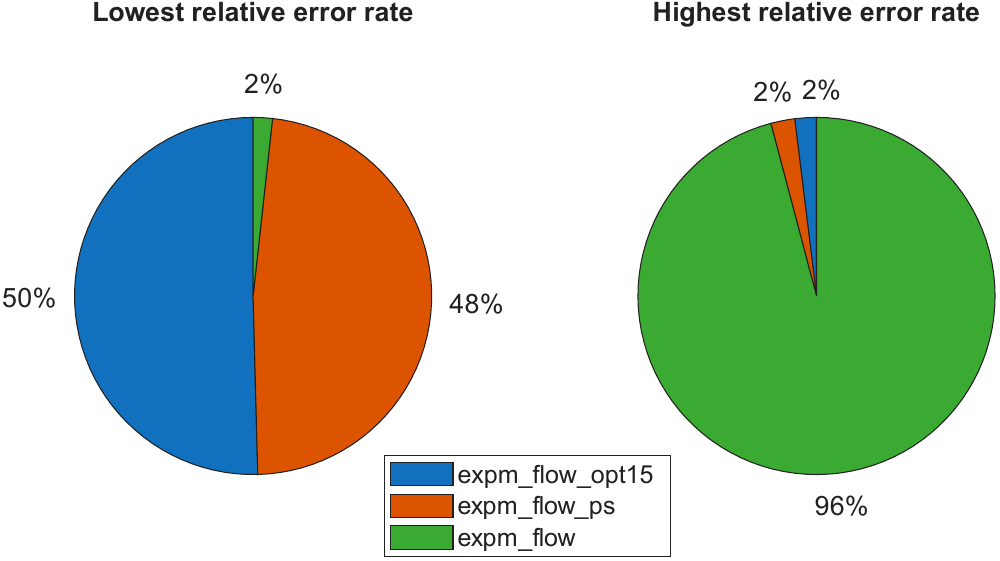}}}}
		\caption{\footnotesize Lowest and highest relative error rate.}
		\label{fig_set2_d}
	\end{subfigure} \\
	\begin{subfigure}[b]{0.48\textwidth}
		\centerline{\includegraphics[scale=0.35]{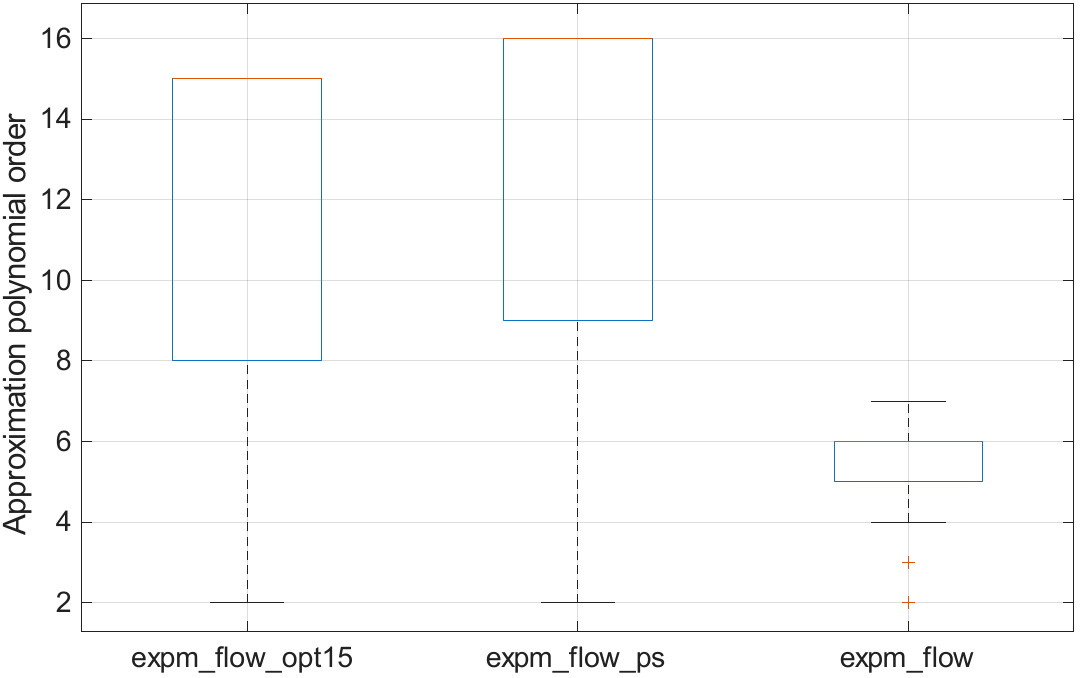}}
		\caption{\footnotesize Polynomial order.}
		\label{fig_set2_e}
	\end{subfigure}
	\begin{subfigure}[b]{0.48\textwidth}
		\centerline{\includegraphics[scale=0.35]{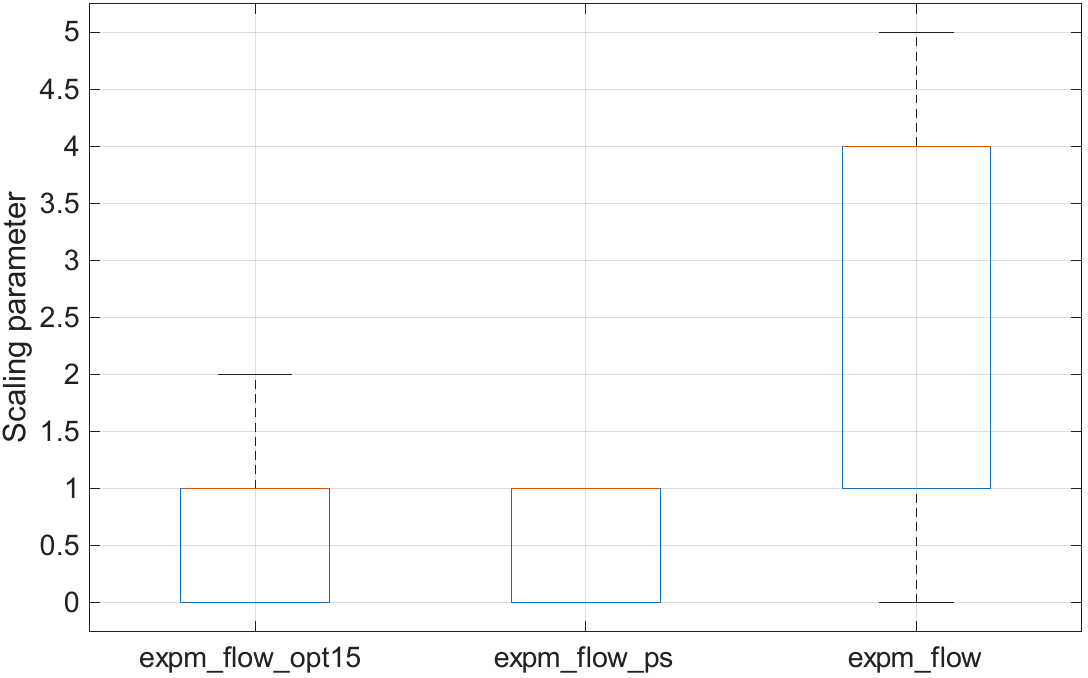}}
		\caption{\footnotesize Scaling parameter.}
		\label{fig_set2_f}
	\end{subfigure}
	\begin{subfigure}[b]{0.48\textwidth}
		\centerline{\includegraphics[scale=0.35]{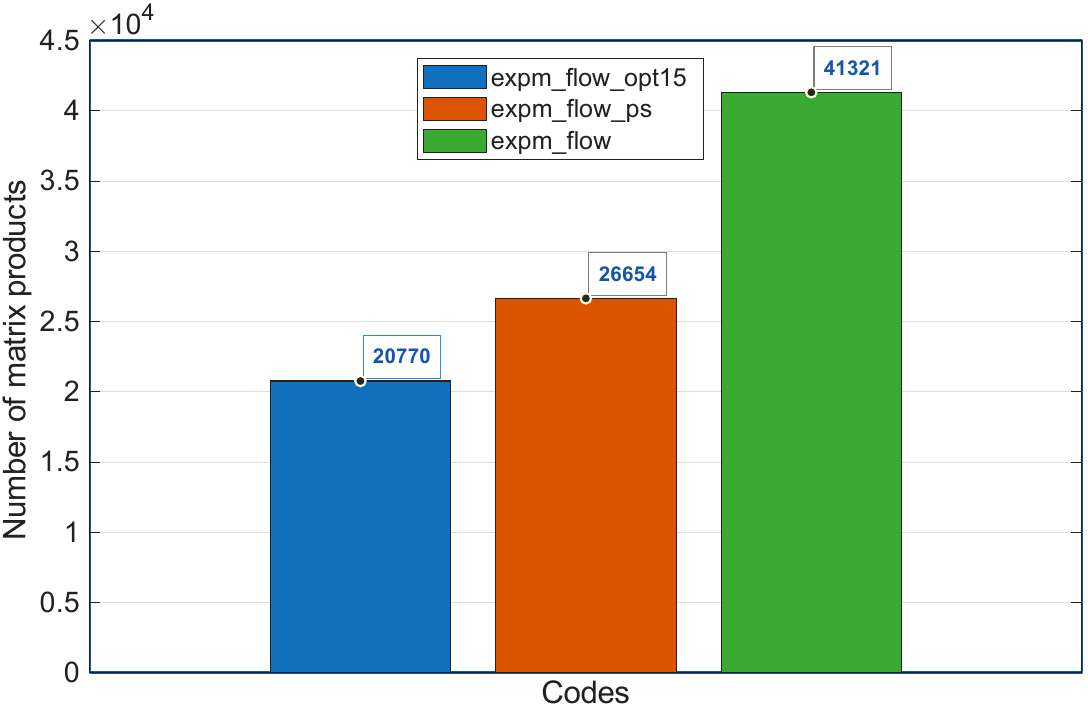}}
		\caption{\footnotesize Number of matrix products.}
		\label{fig_set2_g}
	\end{subfigure}
	\begin{subfigure}[b]{0.55\textwidth}
		\centerline{\includegraphics[scale=0.35]{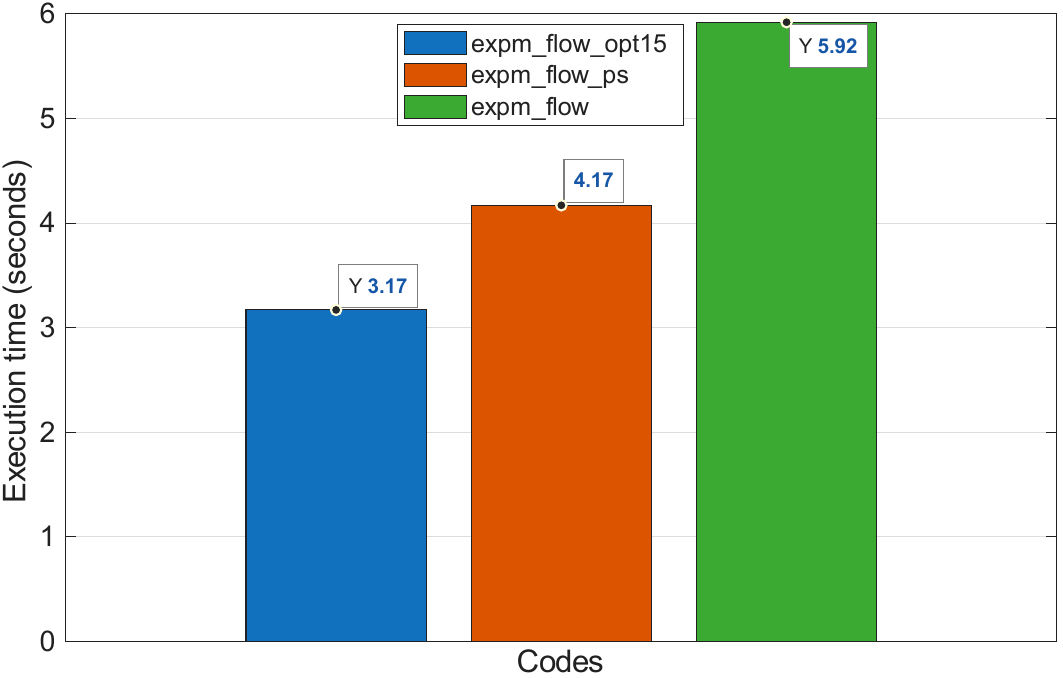}}
		\caption{\footnotesize Execution time.}
		\label{fig_set2_h}
	\end{subfigure}
	\caption{Experimental results for CIFAR10 dataset.}
	\label{fig_set2}
\end{figure}

\begin{figure}[H]
	\begin{subfigure}[b]{0.48\textwidth}
		\centerline{\includegraphics[scale=0.35]{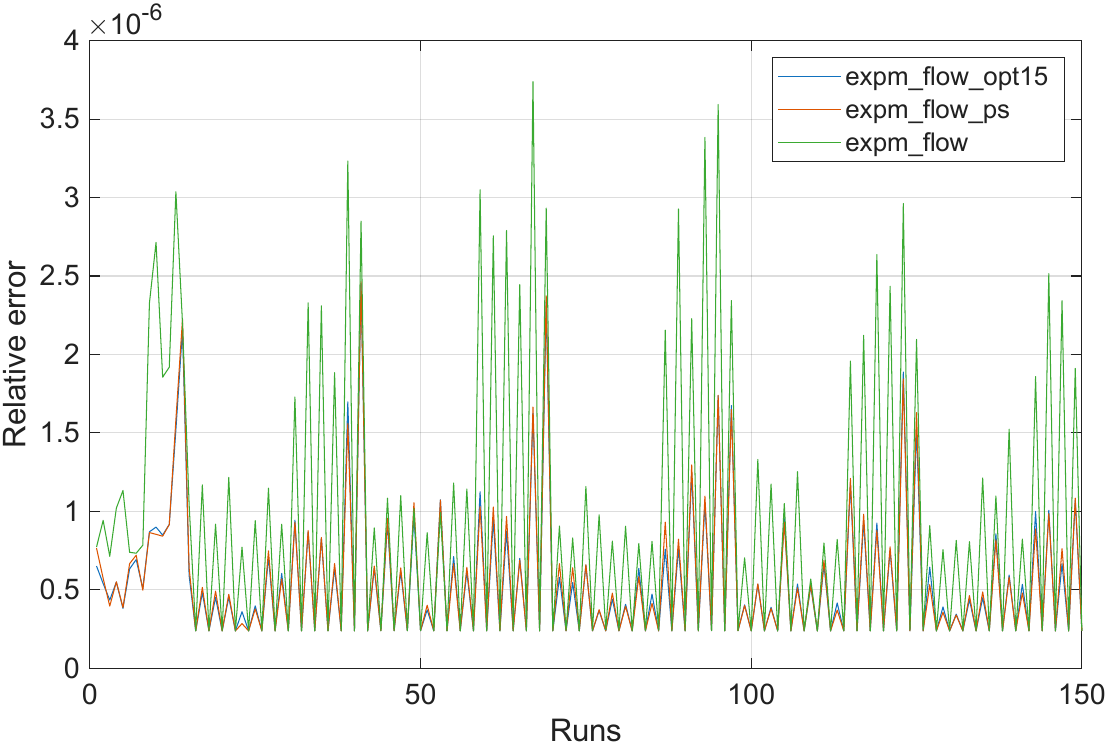}}
		\caption{\footnotesize Normwise relative error.}
		\label{fig_set3_a} 
	\end{subfigure} 
	\begin{subfigure}[b]{0.55\textwidth}
		\centerline{\includegraphics[scale=0.35]{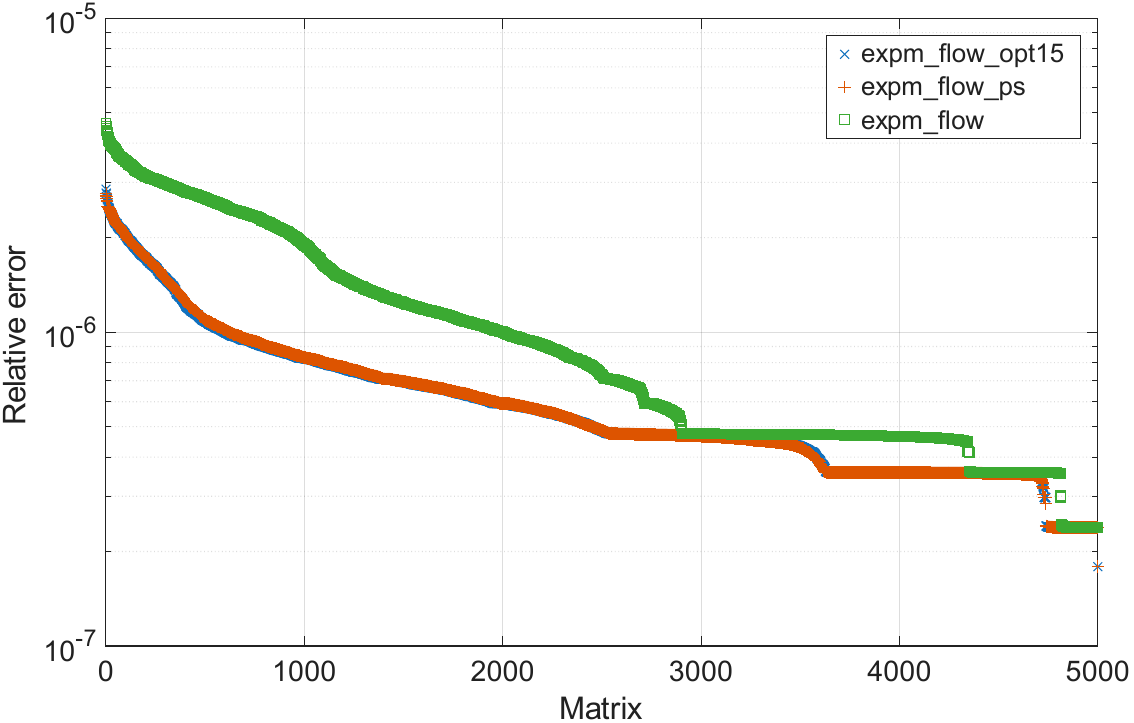}}
        \caption{\footnotesize Unmatched ordered normwise relative error.}
      	\label{fig_set3_b}
	\end{subfigure} \\
	\begin{subfigure}[b]{0.48\textwidth}
		\centerline{\includegraphics[scale=0.35]{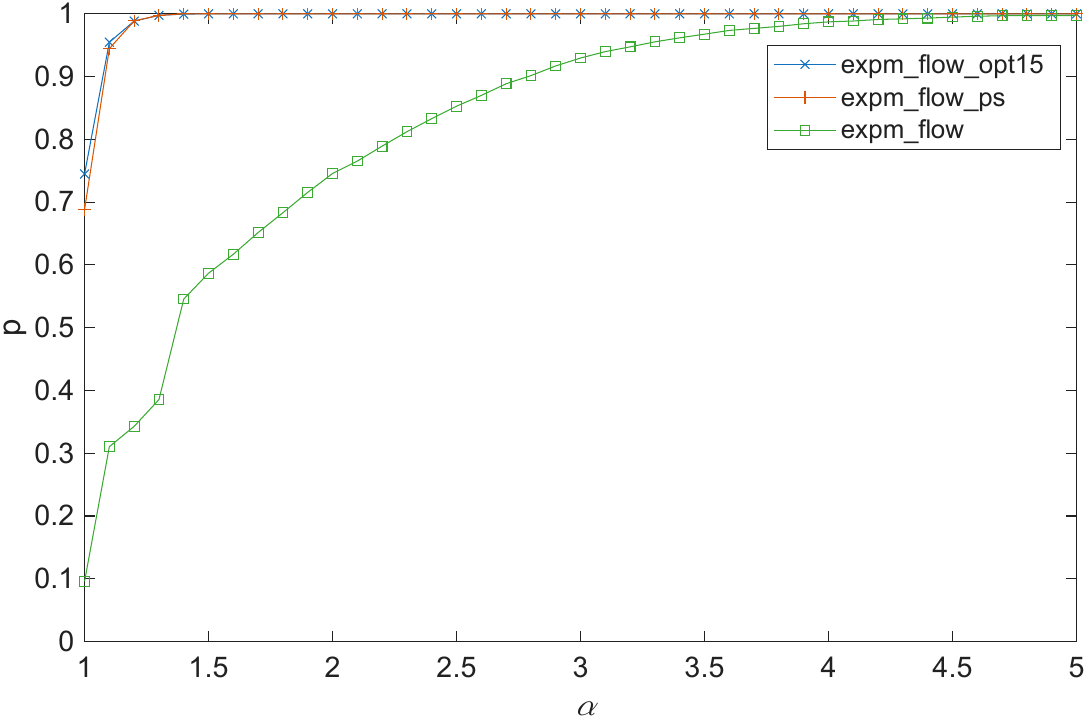}}
        \caption{\footnotesize Performance profile.}        		
        \label{fig_set3_c}
	\end{subfigure}
	\begin{subfigure}[b]{0.58\textwidth}
		\centerline{\fbox{\raisebox{0.65cm}{\includegraphics[scale=0.34]{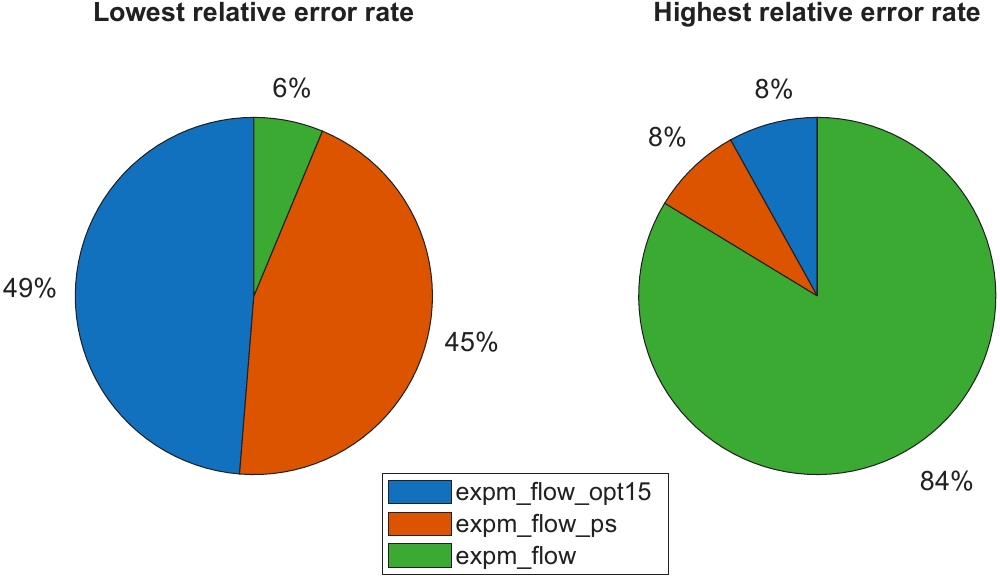}}}}
		\caption{\footnotesize Lowest and highest relative error rate.}
		\label{fig_set3_d}
	\end{subfigure} \\
	\begin{subfigure}[b]{0.48\textwidth}
		\centerline{\includegraphics[scale=0.35]{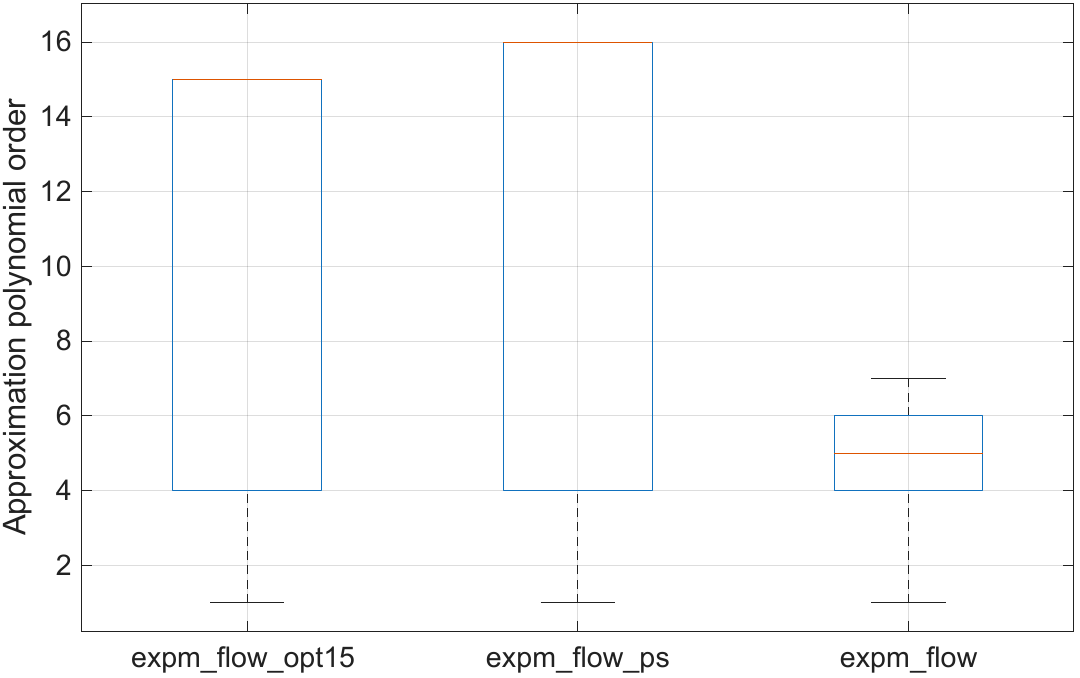}}
		\caption{\footnotesize Polynomial order.}
		\label{fig_set3_e}
	\end{subfigure}
	\begin{subfigure}[b]{0.48\textwidth}
		\centerline{\includegraphics[scale=0.35]{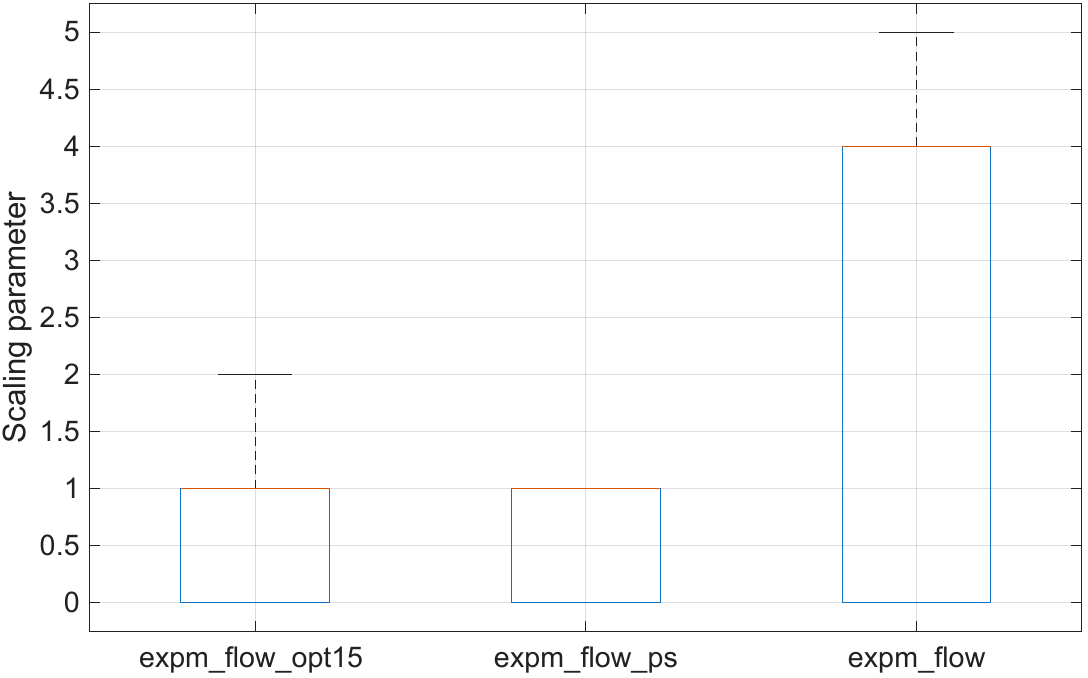}}
		\caption{\footnotesize Scaling parameter.}
		\label{fig_set3_f}
	\end{subfigure}
	\begin{subfigure}[b]{0.48\textwidth}
		\centerline{\includegraphics[scale=0.35]{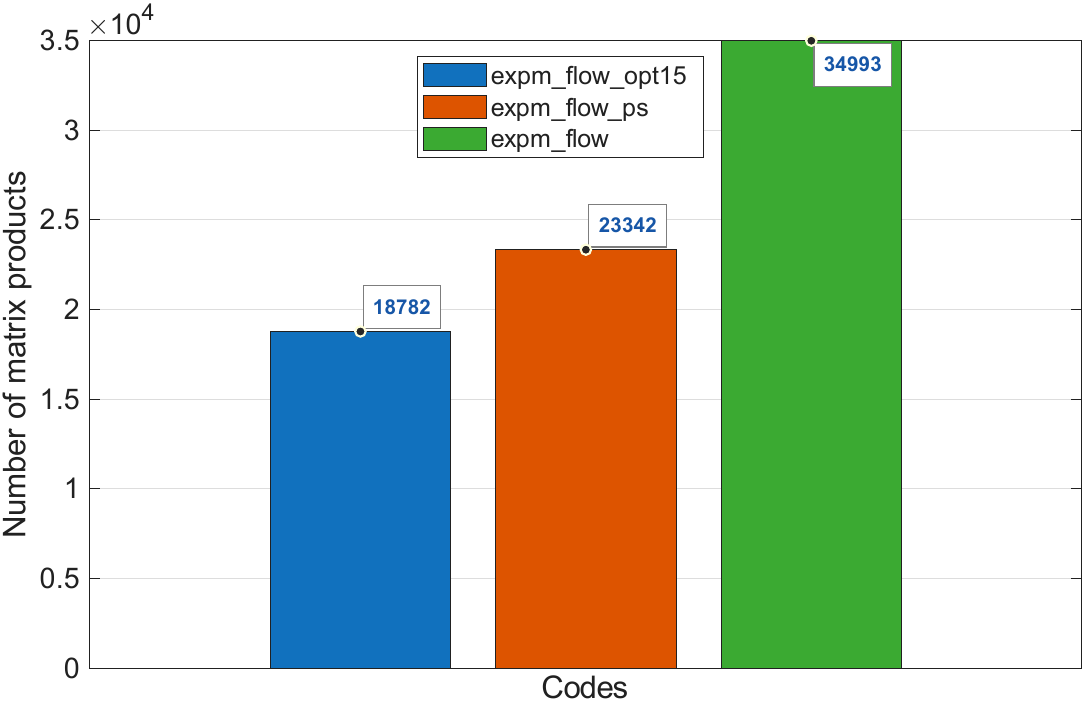}}
		\caption{\footnotesize Number of matrix products.}
		\label{fig_set3_g}
	\end{subfigure}
	\begin{subfigure}[b]{0.55\textwidth}
		\centerline{\includegraphics[scale=0.35]{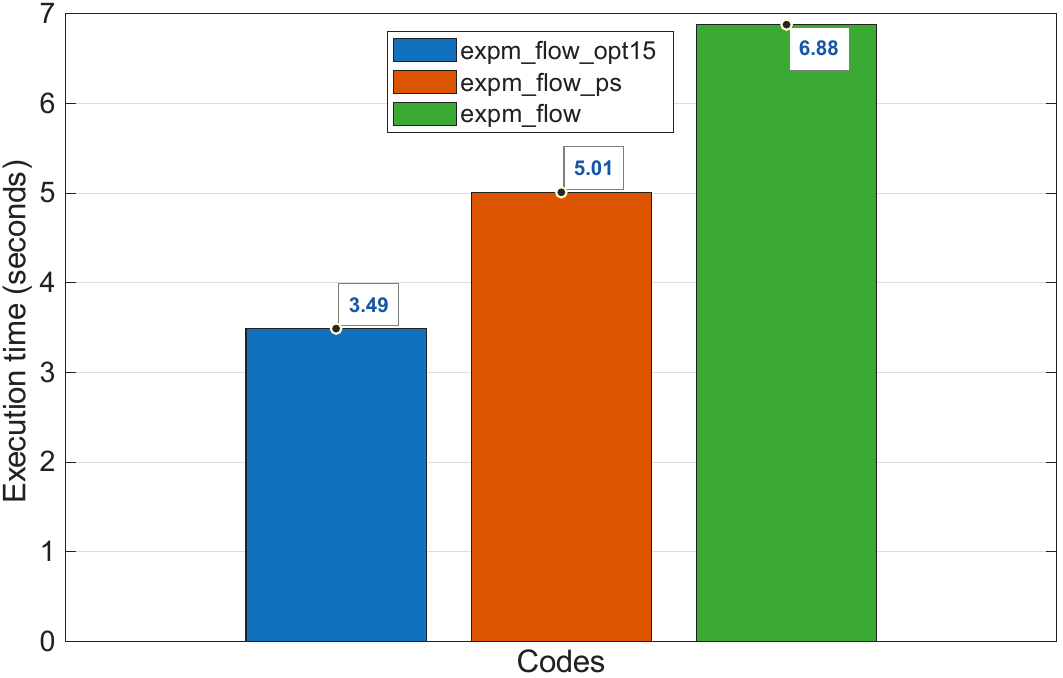}}
		\caption{\footnotesize Execution time.}
		\label{fig_set3_h}
	\end{subfigure}
	\caption{Experimental results for ImageNet32 dataset.}
	\label{fig_set3}
\end{figure}

\begin{figure}[H]
	\begin{subfigure}[b]{0.48\textwidth}
		\centerline{\includegraphics[scale=0.35]{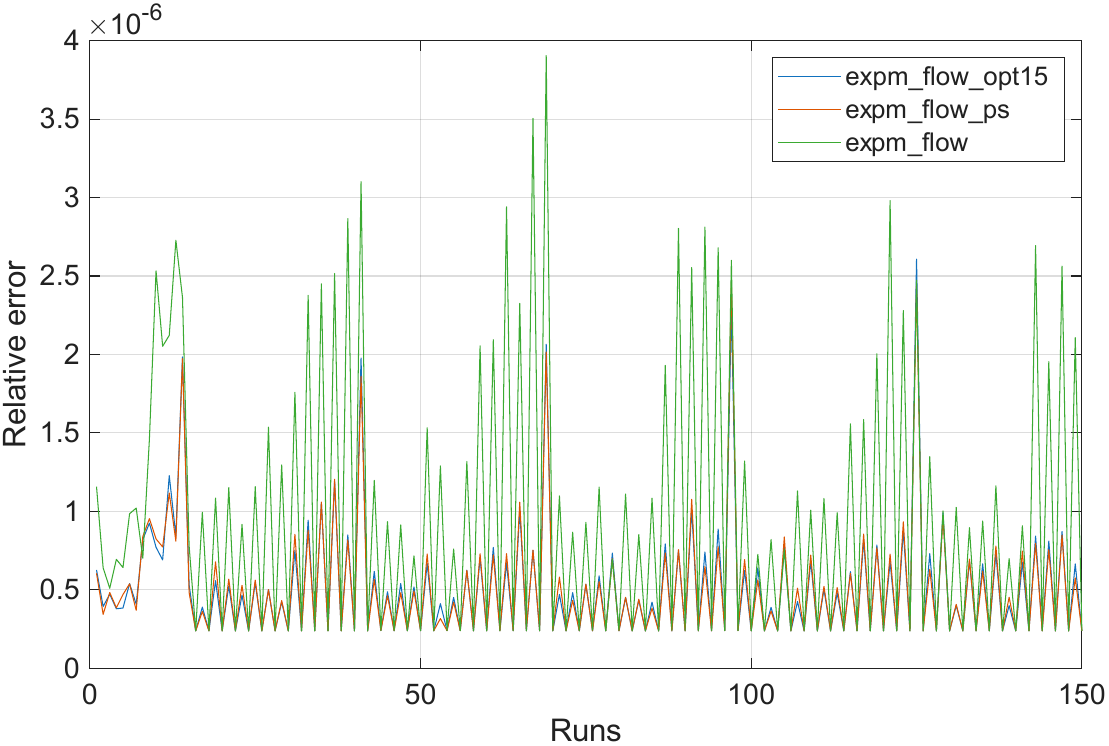}}
		\caption{\footnotesize Normwise relative error.}
		\label{fig_set4_a} 
	\end{subfigure} 
	\begin{subfigure}[b]{0.55\textwidth}
		\centerline{\includegraphics[scale=0.35]{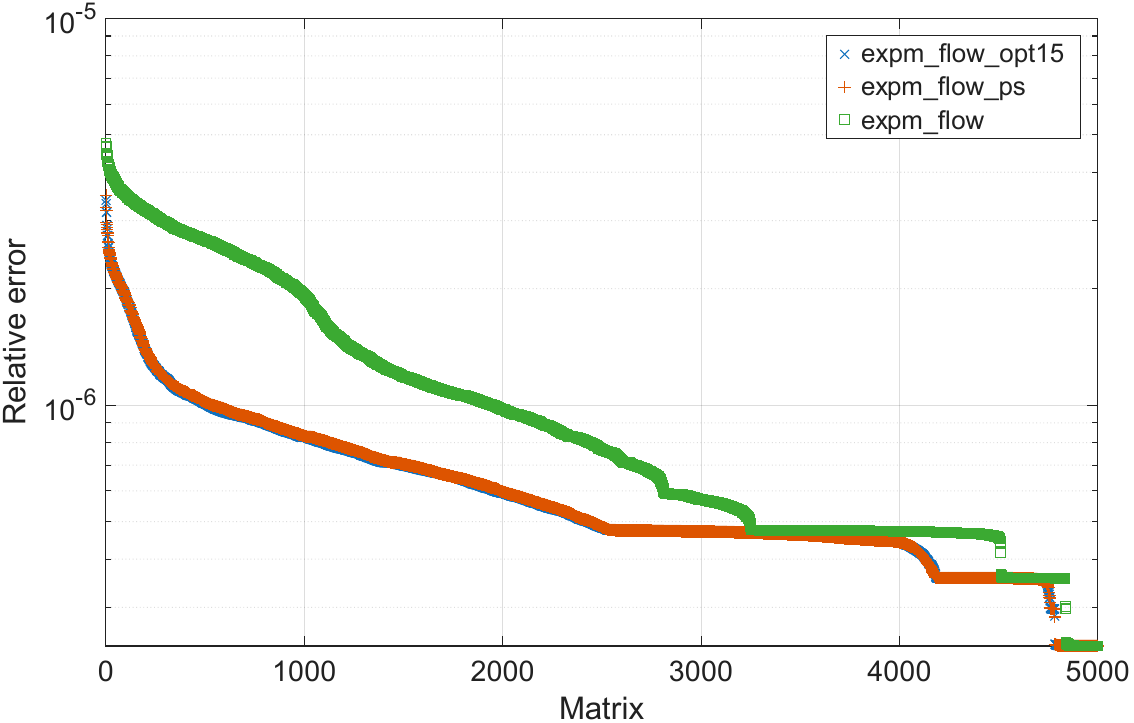}}
        \caption{\footnotesize Unmatched ordered normwise relative error.}
      	\label{fig_set4_b}
	\end{subfigure} \\
	\begin{subfigure}[b]{0.48\textwidth}
		\centerline{\includegraphics[scale=0.35]{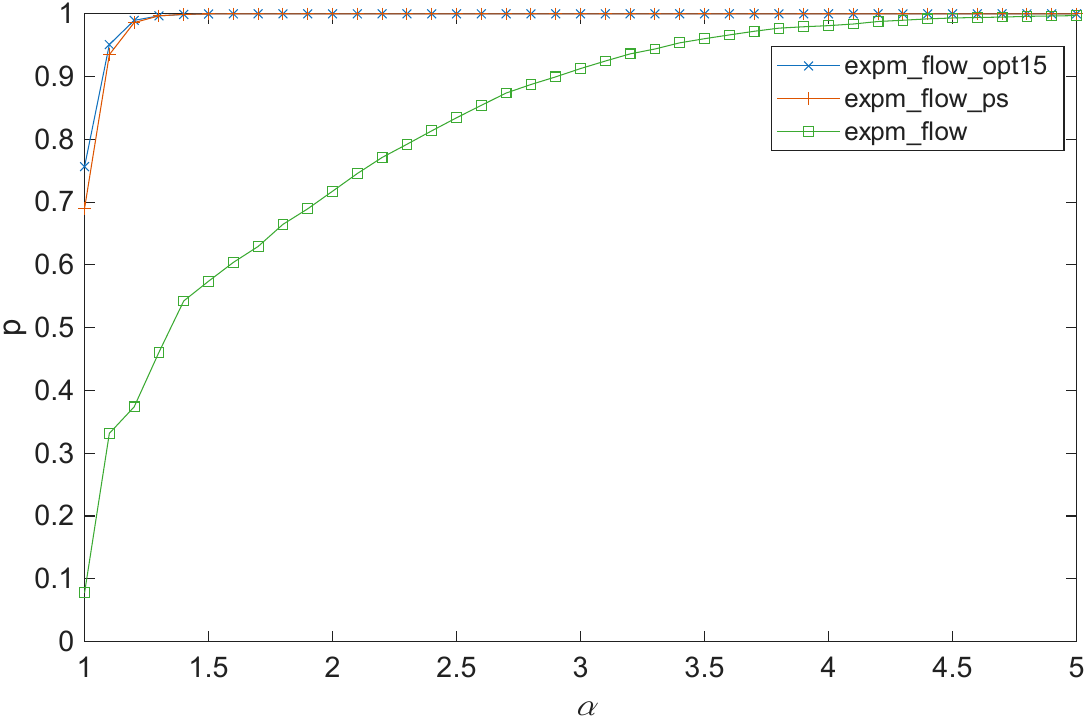}}
        \caption{\footnotesize Performance profile.}        		
        \label{fig_set4_c}
	\end{subfigure}
	\begin{subfigure}[b]{0.58\textwidth}
		\centerline{\fbox{\raisebox{0.65cm}{\includegraphics[scale=0.34]{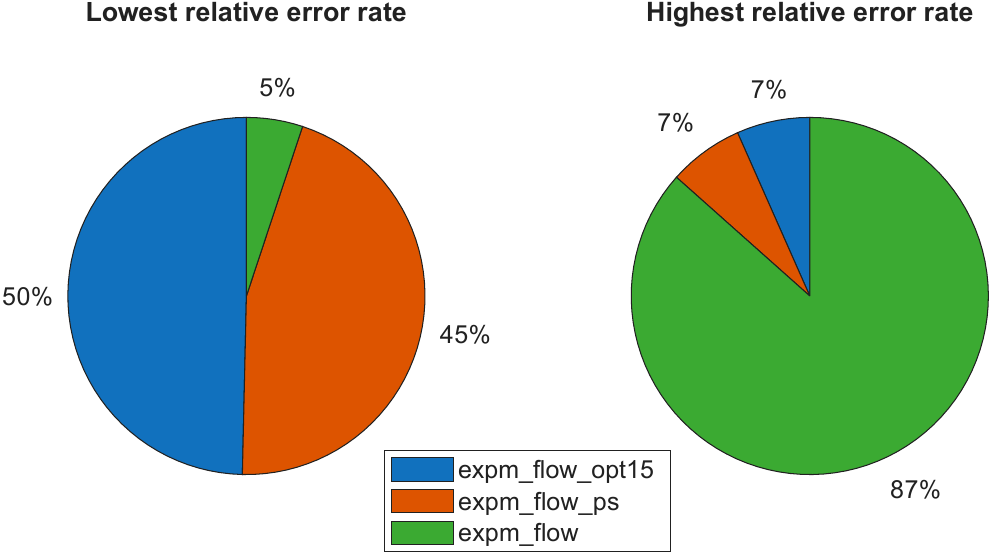}}}}
		\caption{\footnotesize Lowest and highest relative error rate.}
		\label{fig_set4_d}
	\end{subfigure} \\
	\begin{subfigure}[b]{0.48\textwidth}
		\centerline{\includegraphics[scale=0.35]{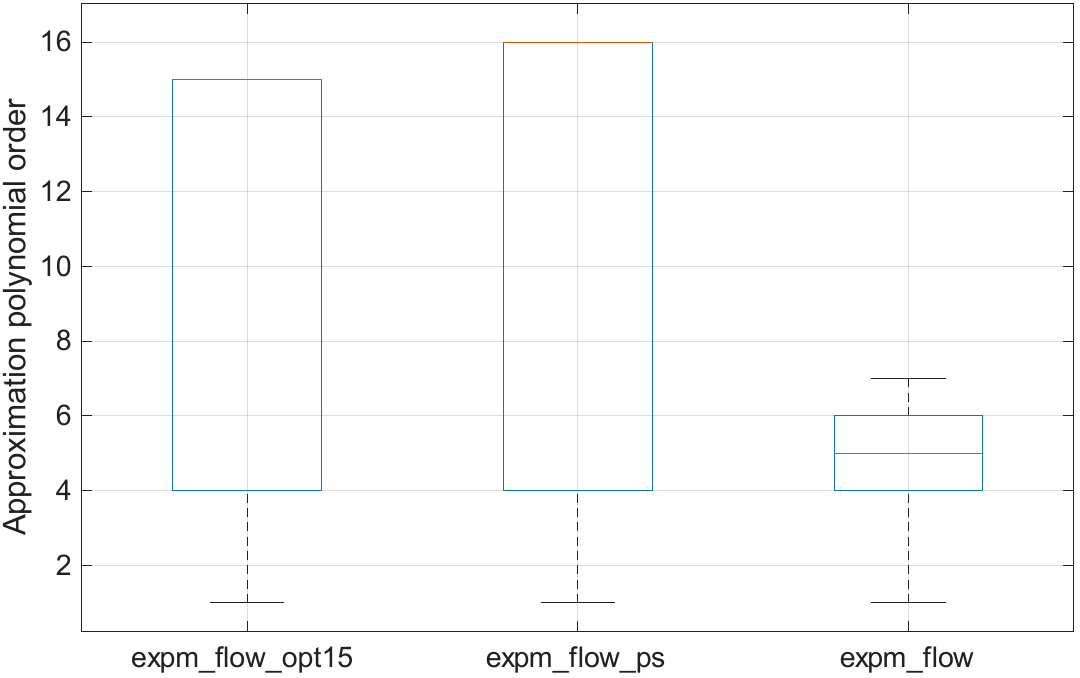}}
		\caption{\footnotesize Polynomial order.}
		\label{fig_set4_e}
	\end{subfigure}
	\begin{subfigure}[b]{0.48\textwidth}
		\centerline{\includegraphics[scale=0.35]{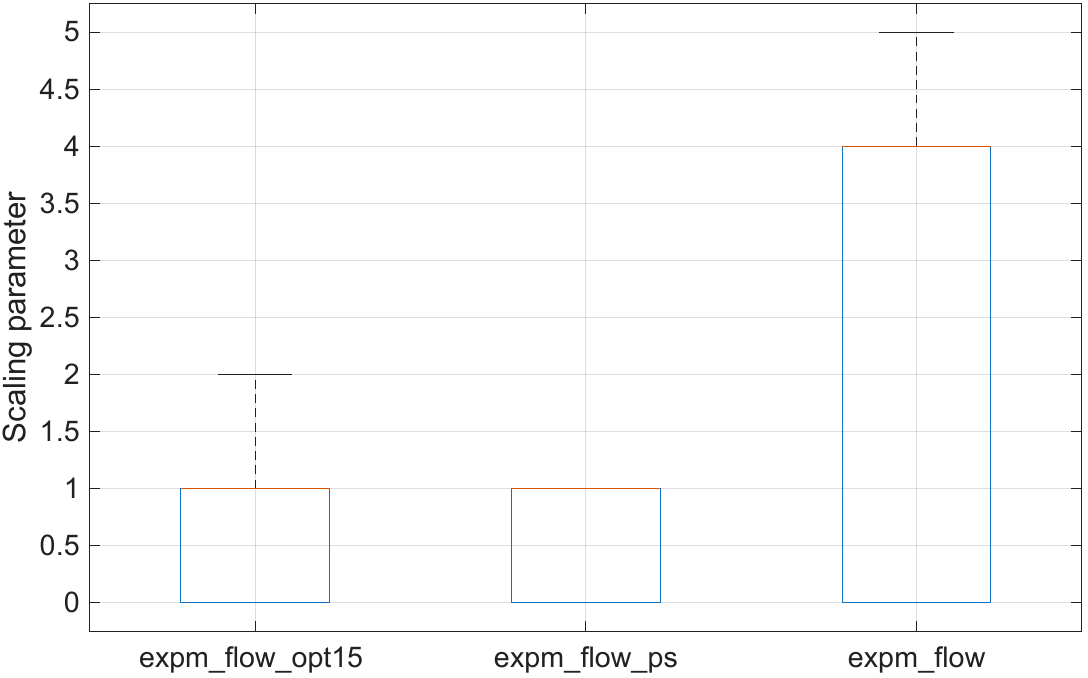}}
		\caption{\footnotesize Scaling parameter.}
		\label{fig_set4_f}
	\end{subfigure}
	\begin{subfigure}[b]{0.48\textwidth}
		\centerline{\includegraphics[scale=0.35]{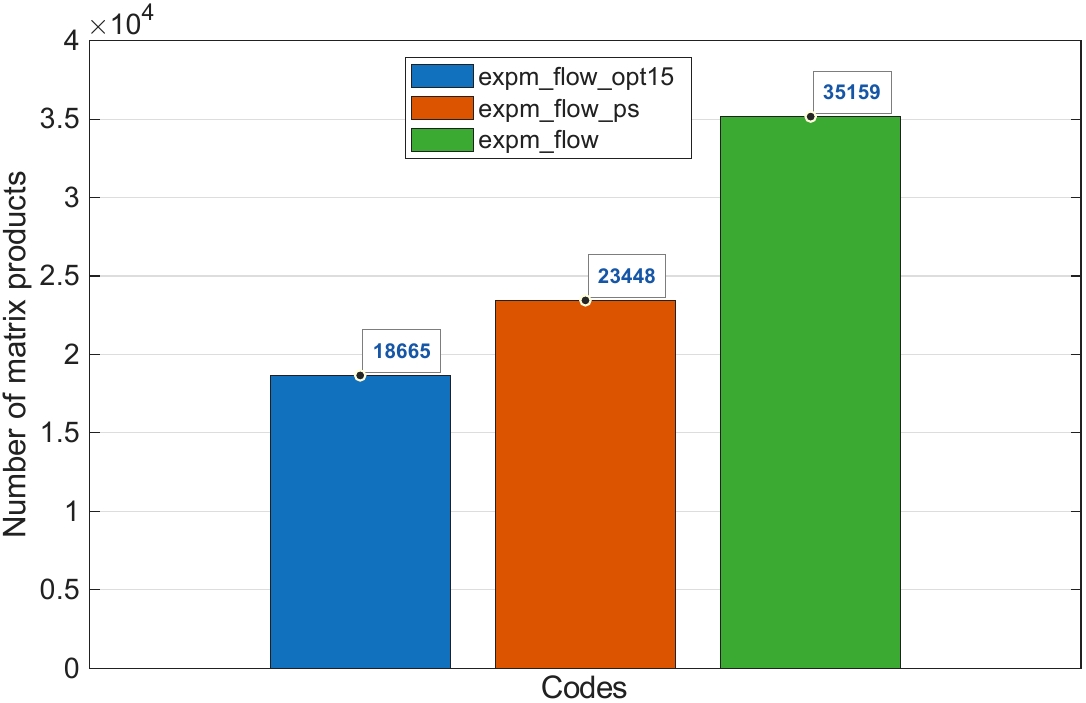}}
		\caption{\footnotesize Number of matrix products.}
		\label{fig_set4_g}
	\end{subfigure}
	\begin{subfigure}[b]{0.55\textwidth}
		\centerline{\includegraphics[scale=0.35]{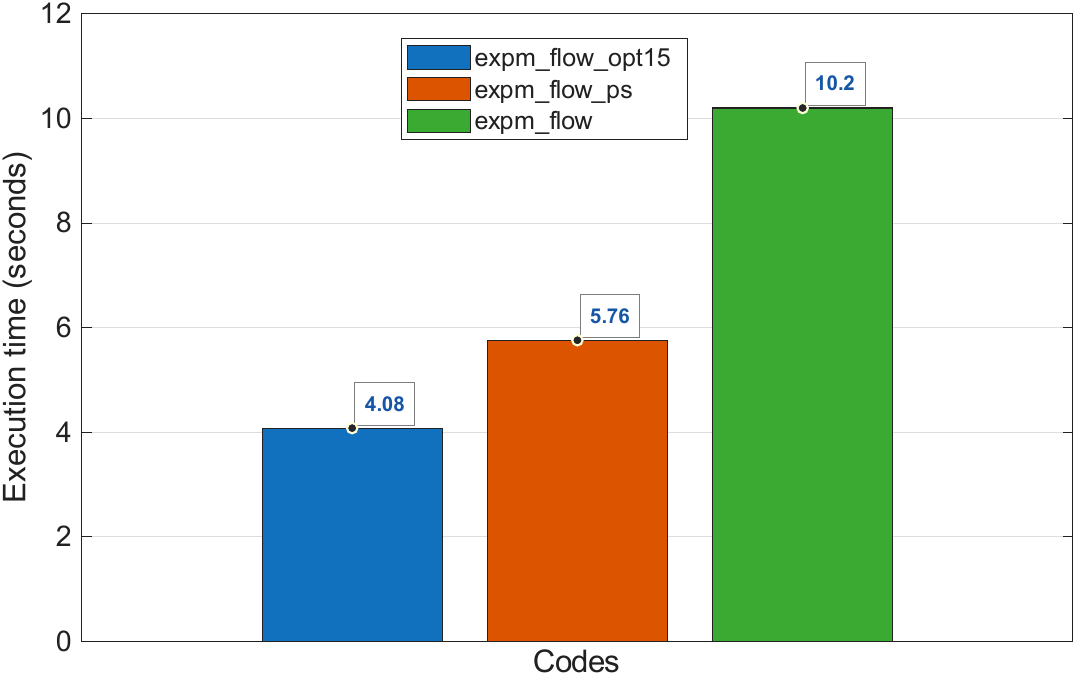}}
		\caption{\footnotesize Execution time.}
		\label{fig_set4_h}
	\end{subfigure}
	\caption{Experimental results for ImageNet64 dataset.}
	\label{fig_set4}
\end{figure}

\section{Performance analysis in generative flow models}\label{sec:performanceanalysis}

To evaluate the efficiency gains of the proposed matrix exponential approach (\texttt{expm\_flow\_opt15}) relative to the baseline (\texttt{expm\_flow}), we conducted a series of numerical experiments on the three aforementioned datasets: CIFAR-10, ImageNet32, and ImageNet64. All tests were executed on a system equipped with an Intel Xeon Platinum 8268 Processor, an NVIDIA RTX8000 GPU and 128~GB of RAM.

Note that in this section both the matrix exponential and the reduced-rank approximation \eqref{eq:expmWV} are used following the generative model framework in \cite{xiao2020generative}. Specifically, the matrix exponential is utilized for matrices smaller than $24 \times 24$, while the approximation \eqref{eq:expmWV} is applied to larger matrices. In the context of the reduced-rank approximation, where no scaling is applied ($s=0$), the computational cost of the baseline algorithm increases linearly with the degree $m$ according to \eqref{C_orig}. Consequently, the proposed algorithm is significantly more efficient, as demonstrated in Section~\ref{sec:EFEVPOL} for Taylor based evaluation formula with approximation order $m=15+$. Furthermore, this performance gap widens as the polynomial degree increases. Moreover, as suggested by Remark~\ref{remark1}, these efficiency ratios could be more significant for nonnormal matrices.

The evaluation consists of two parts. First, isolated benchmarks were performed to compare the execution times of both methods across various matrix sizes, independent of other neural network components. Second, both algorithms were integrated into the coupling layers of a Glow-based architecture \cite{kingma2018glow}, following the methodology described in \cite{xiao2020generative}. Performance is quantified by measuring total computation time and calculating the speed-up ratio, defined as the execution time of \texttt{expm\_flow} divided by that of \texttt{expm\_flow\_opt15}.

The flow model architecture, illustrated in Figure \ref{diagram}, follows a multi-scale structure. The squeeze operation performs downsampling by halving the spatial dimensions and quadrupling the channel count. Subsequently, the split operation partitions the feature maps along the channel dimension; one portion is directed to the output as a latent representation, while the remaining part is propagated through the subsequent layers of the model. Training was conducted for 50 epochs using the Adam optimizer \cite{zhang2018improved} with a fixed learning rate of 0.01.

\begin{figure}[H]
\includegraphics[width=\textwidth]{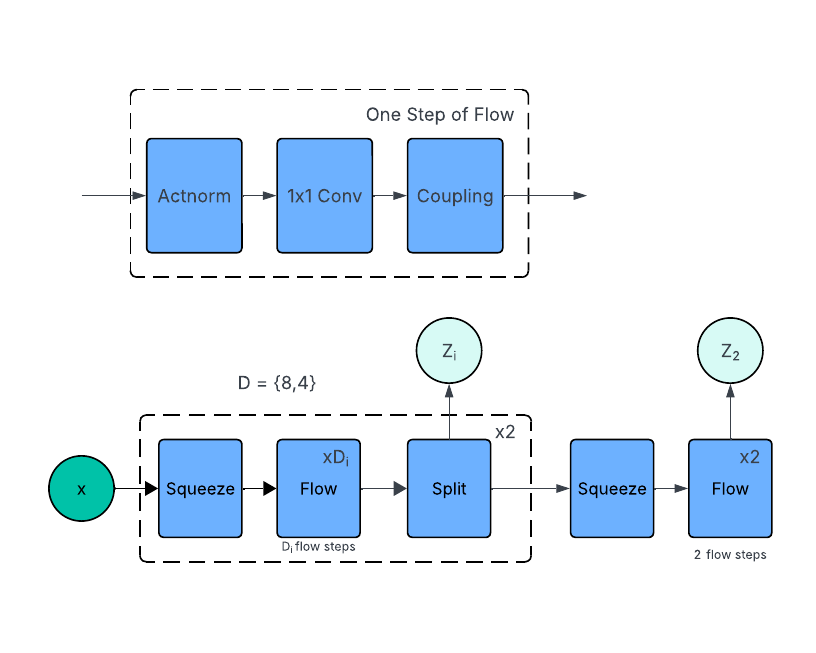}
\caption{Diagram overview of flow model trained to generate images. } \label{diagram} 
\end{figure}

\subsection{Computational scaling and matrix dimensionality}\label{sec:weightsize}

This analysis evaluates how the performance of the proposed matrix exponential implementation, \texttt{expm\_flow\_opt15}, scales relative to the baseline, \texttt{expm\_flow}, as matrix dimensionality increases. We examine matrix orders $n \in \{8, 16, 32, 64, 128, 256, 512, 1024\}$ under two distinct scenarios: individual matrix transformations of size $n \times n$ and batched tensor operations of size $n \times 16 \times 16$. The execution time required to compute 1000 matrix exponentials was recorded for each configuration, with the resulting trends illustrated in Figure \ref{weightsize}. 

In both experimental setups, the relative efficiency of the proposed algorithm improves as the matrix order increases. A more pronounced performance gain is observed when varying the batch size compared to isolated changes in matrix dimensionality. This behavior stems from the fact that for smaller matrices, such as $8 \times 8$ or $16 \times 16$, the total execution cost is not yet dominated by the number of matrix products, as fixed computational overheads remain significant. As the dimensionality grows toward $1024 \times 1024$, the computation becomes strictly limited by the number of matrix multiplications $M$. Consequently, the proposed method yields substantial benefits by reducing the required multiplication count, effectively leveraging the parallel linear algebra routines provided by the PyTorch backend, such as cuBLAS and MKL.

\begin{figure}[H]
\includegraphics[width=.5\textwidth]{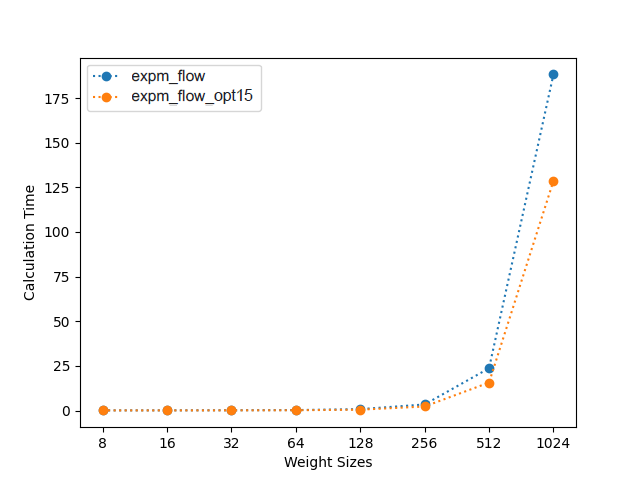}
\includegraphics[width=.5\textwidth]{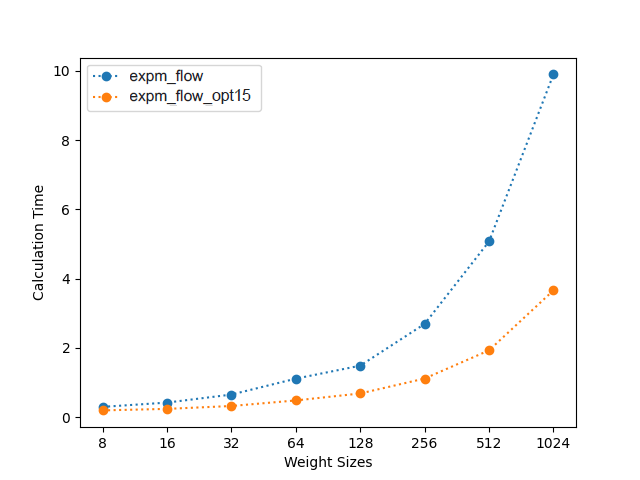}
\caption{Comparison of the execution time required to calculate the matrix exponential using the original method ($\texttt{expm\_flow}$) and the proposed method ($\texttt{expm\_flow\_opt15}$). The results are presented for different matrix sizes $n \in \{2, 4, 8, 16, 32, 64, 128, 256, 512, 1024\}$. (Left) Performance analysis for single $n \times n$ matrices. (Right) Performance analysis for tensors of size $n \times 16 \times 16$.}\label{weightsize}
\end{figure}

\subsection{Training Performance and Scaling}
To evaluate the practical utility of our approach, we integrated the proposed matrix exponential method into a generative framework. Glow \cite{kingma2018glow} is a coupling layer approach to learning invertible transformations. We replaced the original implementation (\texttt{expm\_flow}) with our optimized function (\texttt{expm\_flow\_opt15}) to measure its impact on computational efficiency during the training phase. All models were trained for 50 epochs across the three aforementioned benchmark datasets CIFAR-10, ImageNet32, and ImageNet64. As detailed in Table \ref{training}, the proposed method demonstrates increased computational efficiency across all tested configurations. While the minimum observed speed-up was $3.91$ for ImageNet64, the most significant gain occurred with ImageNet32, where training was nearly 10 times faster ($9.74$) than the baseline. This dramatic reduction in training time, specifically the shift from 32304 seconds to 3317 seconds per epoch on ImageNet32, shows that the computational bottleneck previously associated with matrix exponentials in flow-based models can be effectively mitigated.

\begin{table}[H]
\begin{center}
 \begin{tabular}{|c || c c c  |}
    \hline
      Dataset & CIFAR-10 & ImageNet32 & ImageNet64\\
      \hline
      \hline
     \texttt{expm\_flow} time         &  2463 & 32304 & 42482  \\
     \texttt{expm\_flow\_opt15} time &  443.4 & 3317 & 10875  \\
     \hline
      \hline
    Speed-up & 5.554 & 9.739 & 3.906 \\
    \hline
 \end{tabular}
\caption{Average training time per epoch (in seconds) for the matrix exponential version of Glow. Results compare the efficiency of our proposed $\texttt{expm\_flow\_opt15}$ function against the original $\texttt{expm\_flow}$ implementation across three standard datasets.
}\label{training}
\end{center}
\end{table}

\subsection{Inference and Sampling Performance}

While the previous sections evaluated performance during the training phase, efficiency during inference is equally critical for generative models. Table \ref{sampling} presents a comparison of the sampling times between  \texttt{expm\_flow} and \texttt{expm\_flow\_opt15}. In this context, sampling time refers to the duration required for the model to generate an image from the latent space. We evaluated two scenarios: generating a single image ($n=1$) and generating a batch of 128 images ($n=128$). The results demonstrate that the speedup provided by our method becomes significantly more pronounced as the batch size increases. While the performance is nearly identical for a single sample, the \texttt{expm\_flow\_opt15} variant achieves a speedup factor of approximately $1.95$ for larger batches, effectively halving the inference latency.

\begin{table}[H]
\begin{center}
 \begin{tabular}{|c || c c   |}
    \hline
      Sample & 1 sample & 128 samples \\
      \hline
      \hline
     \texttt{expm\_flow} time         &  0.3839 & 0.8286  \\
     \texttt{expm\_flow\_opt15} time &  0.3836 & 0.4247  \\
     \hline
      \hline
    Speed-up & 1.001 & 1.951 \\
    \hline
 \end{tabular}
 \end{center}
\caption{Average inference times (in seconds) for sampling a matrix
exponential variant of Glow. We compare the original \texttt{expm\_flow} against \texttt{expm\_flow\_opt15} for single and batched samples, including the calculated speedup factor.}\label{sampling}
 \end{table}

\section{Conclusions}\label{sec:conclusions}

This work presented a Taylor-based algorithm for the computation of the matrix exponential, utilizing advanced polynomial evaluation schemes that extend beyond the classical Paterson--Stockmeyer technique. The proposed method incorporates a self-contained framework for the dynamic selection of the Taylor order and scaling factor, based on a rigorous error analysis designed to maintain accuracy for any user-specified tolerance $\varepsilon$ greater than or equal to the working unit roundoff. This approach distinguishes itself from traditional implementations that rely on fixed-precision tables and from complex variable-precision algorithms that require specialized external libraries.

The algorithm was specifically evaluated within the context of generative AI flows, where the matrix exponential constitutes a primary computational requirement for modeling invertible transformations. By leveraging refined error bounds for nonnormal matrices and streamlined operational logic, the proposed method provides a more efficient alternative to previous dynamic schemes. Numerical evaluations conducted on both a diverse testbed of matrices and practical generative models demonstrate that, relative to the original implementation, the proposed Taylor-based approach yields training speedups between $3.91$ and $9.74$ while reducing inference latency by approximately 50\% in large-batch scenarios.

These results demonstrate that modern polynomial evaluation strategies allow for higher-order approximations at a lower computational cost, measured in terms of matrix multiplications ($M$), than traditional Paterson--Stockmeyer-based methods. Consequently, the proposed implementation offers a portable, library-independent solution for high-throughput applications. 

\section*{Acknowledgements}
The authors would like to thank Roger B. Dannenberg for providing valuable feedback on the writing of this manuscript.

\section*{Author contributions}
J. Sastre: Conceptualization, Methodology, Software, Formal analysis, Funding acquisition, Project administration, Writing - original draft, Writing - review \& editing.

D. Faronbi: Software, Validation, Investigation, Data curation,  Resources, Visualization, Writing - original draft, Writing - review \& editing.

J.M. Alonso: Methodology, Software, Formal analysis, Validation, Investigation, Data curation, Resources, Visualization, Writing - original draft, Writing - review \& editing.

P. Traver: Software, Investigation, Validation, Resources, Visualization, Writing - review \& editing.

J. Ib\'a\~nez: Methodology, Formal analysis, Investigation, Data curation, Visualization, Supervision, Writing - review \& editing.

N. Lloret.: Resources, Validation, Supervision, Funding acquisition, Project administration, Writing - review \& editing.

\section*{Declaration of competing interest} The authors declare that they have no known competing financial interests or personal relationships that could have appeared to influence the work reported in this paper.

\section*{Disclosure of generative AI and AI-assisted technologies in the writing process}
During the preparation of this work the authors used Gemini in order to improve the language and readability of the manuscript. After using this tool, the authors reviewed and edited the content as needed and take full responsibility for the content of the published article.

\section{References} \label{references}

\end{document}